\definecolor{Gray}{rgb}{0.93,0.93,0.93}
\newcolumntype{g}{>{\columncolor{Gray}}c}
\newcommand{\code}[0]{\url{https://github.com/ZzoomD/FairSAD}}
\begin{document}

\title{Fair Graph Representation Learning via \\Sensitive Attribute Disentanglement}

\author{Yuchang Zhu}
\affiliation{%
  \institution{Sun Yat-sen University}
  \city{Guangzhou}
  \country{China}
}
\email{zhuych27@mail2.sysu.edu.cn}

\author{Jintang Li}
\affiliation{%
  \institution{Sun Yat-sen University}
  \city{Guangzhou}
  \country{China}
  }
\email{lijt55@mail2.sysu.edu.cn}

\author{Zibin Zheng}
\affiliation{%
 \institution{Sun Yat-sen University} 
 \city{Guangzhou}
 \country{China}
 }
\email{zhzibin@mail.sysu.edu.cn}

\author{Liang Chen}
\authornote{Corresponding author.}
\affiliation{%
  \institution{Sun Yat-sen University}
  \city{Guangzhou}
  \country{China}
}
\email{chenliang6@mail.sysu.edu.cn}

\renewcommand{\shortauthors}{Yuchang Zhu, Jintang Li, Zibin Zheng, \& Liang Chen}

\begin{abstract}
  Group fairness for Graph Neural Networks (GNNs), which emphasizes algorithmic decisions neither favoring nor harming certain groups defined by sensitive attributes (e.g., race and gender), has gained considerable attention. In particular, the objective of group fairness is to ensure that the decisions made by GNNs are independent of the sensitive attribute. To achieve this objective, most existing approaches involve eliminating sensitive attribute information in node representations or algorithmic decisions. However, such ways may also eliminate task-related information due to its inherent correlation with the sensitive attribute, leading to a sacrifice in utility. In this work, we focus on improving the fairness of GNNs while preserving task-related information and propose a fair GNN framework named \textbf{FairSAD}. Instead of eliminating sensitive attribute information, FairSAD enhances the fairness of GNNs via Sensitive Attribute Disentanglement (SAD), which separates the sensitive attribute-related information into an independent component to mitigate its impact. Additionally, FairSAD utilizes a channel masking mechanism to adaptively identify the sensitive attribute-related component and subsequently decorrelates it. Overall, FairSAD minimizes the impact of the sensitive attribute on GNN outcomes rather than eliminating sensitive attributes, thereby preserving task-related information associated with the sensitive attribute. Furthermore, experiments conducted on several real-world datasets demonstrate that FairSAD outperforms other state-of-the-art methods by a significant margin in terms of both fairness and utility performance. Our source code is available at \code.
\end{abstract}

\begin{CCSXML}
<ccs2012>
   <concept>
       <concept_id>10010147.10010257</concept_id>
       <concept_desc>Computing methodologies~Machine learning</concept_desc>
       <concept_significance>500</concept_significance>
       </concept>
 </ccs2012>
\end{CCSXML}

\ccsdesc[500]{Computing methodologies~Machine learning}

\keywords{Graph Neural Networks; Group Fairness; Graph Representation Learning}


\maketitle

\section{Introduction}
Graph Neural Networks (GNNs) have emerged as a powerful tool for learning node representation from graph-structured data, which are employed in various applications such as recommender systems~\cite{gao2022graph}, and protein-protein interaction prediction~\cite{lv2021learning}. Despite the significant success, GNNs may suffer from fairness issues due to biases inherited from training data and further amplified by their message-passing mechanism~\cite{dai2021say,agarwal2021towards,dong2022edits}. In the context of fairness in GNNs, a well-studied and popular research problem is group fairness which highlights algorithmic decisions that do not discriminate against or favor certain groups defined by the sensitive attribute, such as race and gender. In other words, group fairness aims to ensure that the outputs of GNNs are independent of the sensitive attribute. 

In recent years, studies have been carried out to improve the group fairness of GNNs by mitigating biases from training data~\cite{ling2022learning,spinelli2021fairdrop,dong2022edits} or training GNNs with fairness-aware frameworks~\cite{dai2021say,bose2019compositional,wang2022improving}. The core idea behind most of these approaches is removing the sensitive attribute-related information, thereby enforcing GNNs to make decisions independent of the sensitive attribute. However, such approaches inevitably remove some task-related information due to its correlation with the sensitive attribute, as shown in Figure~\ref{fig:intro}(a). As a result, this leads to performance degradation of GNNs in downstream tasks. Although prior works also emphasize the trade-off between fairness and utility, it remains challenging to compensate for the performance degradation caused by removing task-related information.

\begin{figure}[t]
\centering
\includegraphics[width=0.85\columnwidth]{./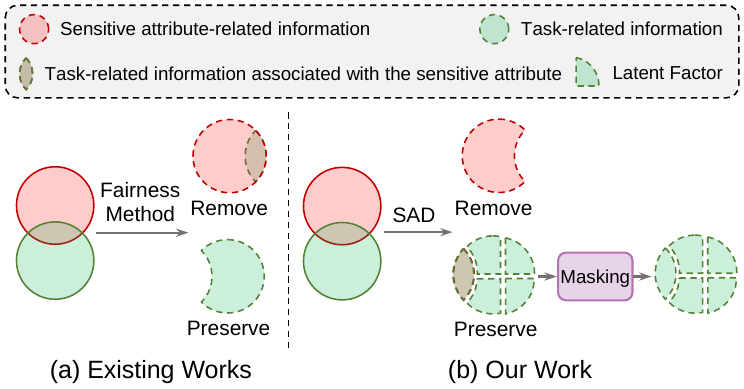} 
\caption{Comparison of FairSAD with existing works. Existing works inevitably eliminate the task-related information due to its correlations with the sensitive attribute.}
\label{fig:intro}
\end{figure}

To improve fairness while preserving task-related information, it is crucial to reasonably handle task-related information that is also associated with the sensitive attribute. Inspired by disentangled representation learning (DRL)~\cite{locatello2019challenging,disengcn},
disentanglement may be an established solution and provides valuable insights into fairness. DRL, whose goal is to recover a few explanatory factors of variation which generate the real-world data we observe, has been proven to be beneficial for learning fair representation in Euclidean data~\cite{locatello2019fairness,creager2019flexibly}. Despite its success, the effectiveness on graph-structured data remains under-explored. There may be two potential advantages for DRL in graph fairness: First, DRL reduces correlations between the sensitive attribute and other sensitive attribute-irrelevant representation dimensions due to its goal of learning independent representations for different latent factors~\cite{locatello2019fairness}. Secondly, DRL simplifies downstream prediction tasks and leads to better utility performance~\cite{van2019disentangled}.        

In light of DRL, we propose FairSAD, a simple yet effective graph representation learning framework for improving fairness while preserving task-related information. FairSAD has two key modules, i.e., (1) sensitive attribute disentanglement (SAD) and (2) sensitive attribute masking. SAD aims to learn disentangled node representations that the sensitive attribute is disentangled into independent components. Sensitive attribute masking involves weakening the sensitive attribute in task-related information via channel masking. As shown in Figure~\ref{fig:intro}(b), the core idea behind FairSAD is the use of SAD, which mitigates the impact of the sensitive attribute-related information on other representation channels while preserving task-related information that is relevant to the sensitive attribute. In this way, FairSAD benefits from the independent representation characteristics of DRL with a higher fairness level. Simultaneously, it leverages the characteristic of capturing latent factors, leading to better utility performance. Our contributions are as follows:

\begin{itemize}
    \item We explore a phenomenon wherein removing sensitive attribute-related information to improve fairness inadvertently leads to the removal of task-related information.
    \item We propose FairSAD, a graph representation learning framework for improving fairness while preserving utility. To our best knowledge, this is the first work to improve fairness in graph-structured data via disentanglement.
    \item We conduct extensive experiments on five real-world datasets, demonstrating the superior performance of FairSAD over other state-of-the-art fairness methods.
\end{itemize}

\section{Related Work}
In this section, we give a brief overview of related work on disentangled representation learning and fairness in graph.

\subsection{Disentangled Representation Learning}
Disentangled representation learning, which learns representations that disentangle latent factors behind data, has emerged as a major area of research on representation learning~\cite{bengio2013representation,locatello2019challenging}. The prevalent technique of DRL on Euclidean data is derived from variational autoencoders (VAEs)~\cite{kingma2013auto}, e.g., $\beta$-VAE~\cite{higgins2016beta}, FactorVAE~\cite{kim2018disentangling}, $\beta$-TCVAE~\cite{chen2018isolating}, WAEs~\cite{rubenstein2018learning}. Meanwhile, previous studies have shown that disentangled representations are highly promising for enhancing the generalization ability~\cite{wang2022disentangled}, robustness~\cite{alemi2016deep}, and interpretability~\cite{lipton2018mythos} of models. 

Motivated by DRL on Euclidean data, research on graph-structured data also involves learning disentangled representations with GNNs. DisenGCN~\cite{disengcn} is the first work to investigate the topic of disentangled representations learning on graph. DisenGCN proposes a neighborhood routing mechanism to identify the latent factor causing the connection from a given node to one of its neighbors. 
To further improve DRL on graph, IPGDN~\cite{liu2020independence}, DGCF~\cite{wang2020disentangled} build upon DisenGCN and further encourage independent representations between different latent factors through a regularizer. 
DisenKGAT~\cite{wu2021disenkgat} leverages a relation-aware aggregation mechanism and mutual information minimization to disentangle factors from micro- and macro-disentanglement views. Additionally, there are several alternative approaches available in disentangled graph representation learning, such as FactorGCN~\cite{yang2020factorizable}, DiCGRL~\cite{kou2020disentangle}, LGD-GCN~\cite{guo2022learning}. The superior performance of DRL on graph has paved the way for the utilization of disentangled representation in diverse fields, including citation generation~\cite{wang2022disencite}, molecule generation~\cite{du2022small}, recommendation~\cite{li2022disentangled}. Despite the great progress, the effectiveness of disentangled graph representation on fairness remains under-explored even if some studies on Euclidean data have shown that disentanglement may be a useful property to encourage fairness~\cite{locatello2019fairness}. Thus, our work aims to explore learning fair graph representation via disentanglement.


\subsection{Fairness in Graph}
Fairness notions in graph can be divided into three categories, i.e., group fairness~\cite{dwork2012fairness}, individual fairness~\cite{kang2020inform}, and other fairness~\cite{ma2022learning}. Group fairness, which emphasizes algorithmic decisions neither favor nor harm certain groups defined by the sensitive attribute, is investigated in this work. As an emerging area of research, prior studies have explored various approaches to improve group fairness. Adversarial learning-based approaches, such as FairGNN~\cite{dai2021say}, FairVGNN~\cite{wang2022improving}, Graphair~\cite{ling2022learning}, aim to learn node representation or modify original data that fool discriminator to identify the sensitive attribute. 

Another line of research balances node representation differences between multiple demographic groups divided by the sensitive attribute through some effective technologies, including minimizing distribution distance~\cite{zhu2023fairness,dong2022edits,fan2021fair}, reducing connection within the same demographic group~\cite{spinelli2021fairdrop}, sampling neighbors with balance awareness~\cite{lin2023bemap}, re-weighting edges to balance message~\cite{li2020dyadic}. Additionally, other approaches, e.g., Fairwalk~\cite{rahman2019fairwalk}, NIFTY~\cite{agarwal2021towards}, are empirically proven as effective approaches in improving group fairness. Despite these achievements, the core idea behind most fairness approaches is removing sensitive attribute-related information. Owing to such a design, these approaches inevitably remove some task-related information due to its correlation with the sensitive attribute, resulting in sacrificing utility. In contrast to previous works, FairSAD is designed to improve group fairness while preserving task-related information using disentanglement, an aspect that has remained under-explored in previous advances.  

\begin{figure*}[!ht]
\centering
\includegraphics[width=0.9\textwidth]{./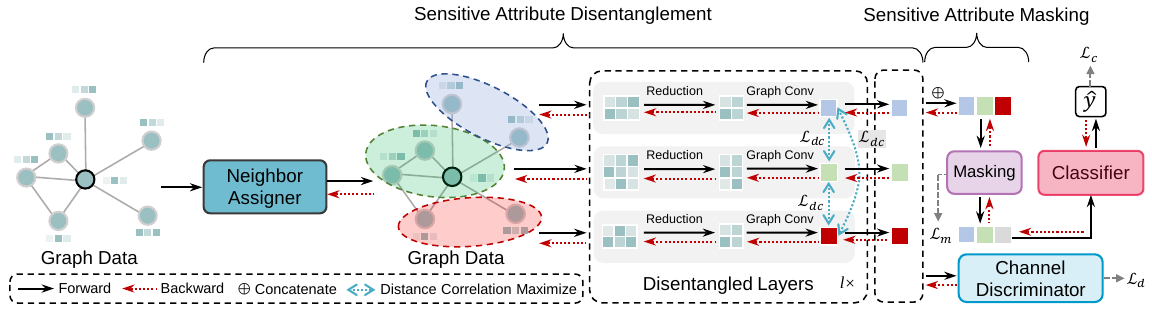} 
\caption{Overview of FairSAD. 
Disentangled layers in this example have three channels due to assuming three latent factors.}
\label{fig:overview}
\end{figure*}

\section{Preliminary}

\subsection{Notations}
In this paper, we focus on learning fair node representations. Let $\mathcal{G}=(\mathcal{V}, \mathcal{E}, \textbf{X})$ denote an undirected attributed graph, comprised of a set of $\lvert \mathcal{V} \rvert = n$ nodes $\mathcal{V}$ and a set of $\lvert \mathcal{E} \rvert = m$ edges $\mathcal{E}$. $\textbf{X} \in \mathbb{R}^{n\times d}$ represents the node attribute matrix where $d$ is the node attribute dimension. $\textbf{A} \in \lbrace0, 1\rbrace^{n\times n}$ is the adjacency matrix where $\textbf{A}_{uv}=1$ indicates that there exists edge $e_{uv} \in \mathcal{E}$ between the node $u$ and the node $v$, and $\textbf{A}_{uv}=0$ otherwise. $\textbf{s} \in \lbrace0, 1\rbrace^{n}$ represents the binary sensitive attribute. For node $u$ and node $v$, $s_u=s_v$ indicates that these two nodes belong to the same demographic group. Most GNNs update the node representation vector $h$ through aggregating messages of its neighbors, which can be summarized as two steps: (1) message propagation and aggregation; (2) node representation updating. Thus, the $l$-th layer of GNNs is formalized as follows:
\begin{equation}
\label{eq:gnn}
\small
    \textbf{h}^{(l)}_u = \operatorname{UPD}^{(l)}(\{\textbf{h}^{(l-1)}_u, \operatorname{AGG}^{(l)}(\{\textbf{h}^{(l-1)}_v:v\in \mathcal{N}(u)\})\}),
\end{equation}
where $\operatorname{AGG}^{(l)}(\cdot)$ and $\operatorname{UPD}^{(l)}(\cdot)$ denote aggregation function and update function in $l$-th layer, respectively. $\mathcal{N}(u)$ denote the set of nodes adjacent to node $u$.

Assume that each node representation consists of $K$ independent components corresponding to $K$ latent factors, i.e., $K$ channels. $\textbf{h}_u=[\textbf{z}^{1}_{u}, \textbf{z}^{2}_{u}, ..., \textbf{z}^{k}_{u}]$ is the disentangled representation vector of node $u$, where $\textbf{z}^{k}_{u} \in \mathbb{R}^{\frac{d_{h}}{K}} (1 \leq k \leq K)$ is the $k$-th independent component, $d_{h}$ is the hidden dimension. Meanwhile, $\textbf{H}=[\textbf{Z}^{1}, \textbf{Z}^{2}, ..., \textbf{Z}^{k}]$ is the disentangled representations of a set of nodes, where $\textbf{Z}^{k}$ is the $k$-th independent component of a set of nodes. 

\subsection{Problem Definition}
Given $\mathcal{G}$ with the sensitive attribute $\textbf{s}$, our goal is to learn fair node representation $\tilde{\textbf{H}}$ while preserving task-related information. Take the node classification task as an example, the goal is to learn node representation for predicting node labels $\hat{\textbf{y}}$. Here, $\hat{\textbf{y}}$ should be independent of $\textbf{s}$ while maintaining high classification accuracy. 

\section{Present Work: FairSAD}
\label{sec:present_work}
In this section, we first give a detailed description of FairSAD, followed by a theoretical analysis. Figure~\ref{fig:overview} presents an overview of FairSAD. Here, nodes within the same elliptical block imply that a shared latent factor leads to their connection with the central node. FairSAD consists of two modules, i.e., sensitive attribute disentanglement and sensitive attribute masking. In SAD, FairSAD disentangles the sensitive attribute into independent components to mitigate its impact on other components. In sensitive attribute masking, FairSAD employs a channel masking to identify the sensitive attribute-related component while decorrelating it. Overall, FairSAD minimizes the impact of the sensitive attribute on final results instead of removing it, avoiding the inadvertent removal of task-relevant information associated with the sensitive attribute. 

\subsection{Sensitive Attribute Disentanglement}
\label{subsec:SAD}

\begin{figure}[!t]
\centering
\includegraphics[width=0.95\linewidth]{./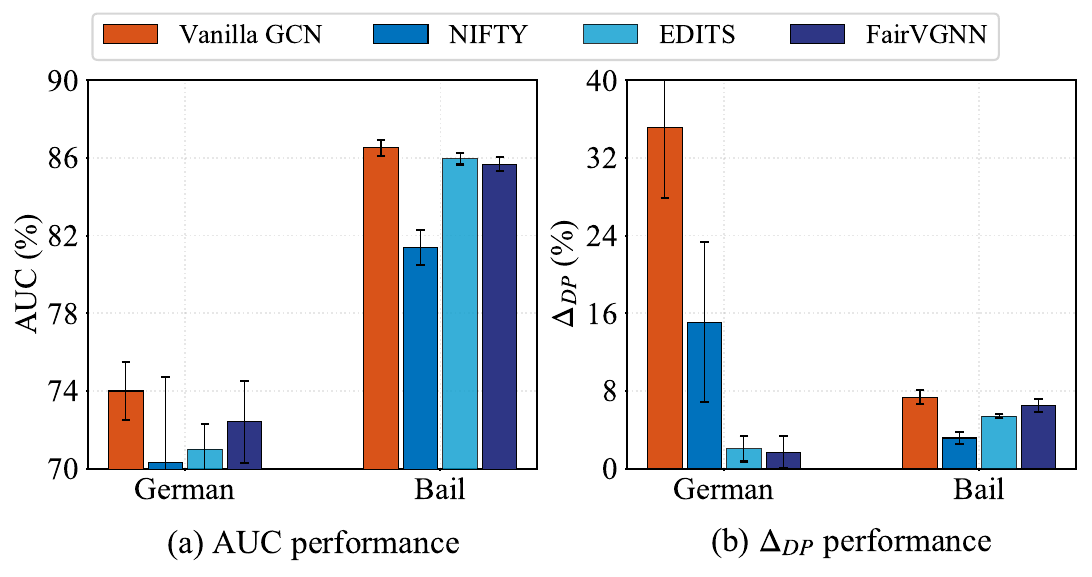}
\caption{Performance of state-of-the-art fairness methods. Despite significant progress in fairness, these methods inevitably suffer from utility performance degradation.}
\label{fig:preliminary}
\end{figure}

We investigate the performance of state-of-the-art graph fairness methods and vanilla GCN, as shown in Figure~\ref{fig:preliminary}. We observe that despite significant progress in fairness ($\Delta_{DP}$), all methods inevitably experience a degradation in utility performance. A potential explanation for this phenomenon could be that the previous advances aim to eliminate the sensitive attribute information, leading to the unintentional removal of the task-related information.

To address this, we propose improving fairness through SAD which involves separating sensitive attribute information into an independent component. When a latent factor corresponds to the sensitive attribute, the sensitive attribute-related information will be disentangled into the independent component, thereby alleviating its impact on other components. Enjoying such a design, SAD also preserves task-related information related to the sensitive attribute. SAD consists of two parts, i.e., neighbor assigner and disentangled layers. The former identifies the latent factor causing the connection between two nodes, while the latter performs multi-channel graph convolution to obtain disentangled representations. 

\subsubsection{Neighbor Assigner}
To separate the sensitive attribute into an independent component, we need to identify latent factors leading to connection between nodes.
DisenGCN~\cite{disengcn} identifies latent factors in an iterative way and further obtains disentangled node representations. However, such a way has a high time complexity and iterative aggregation hinders the mining of latent factors. Instead of the iterative way, we propose a neighbor assigner to identify the latent factor causing the connection and pave the way for disentanglement. 


Let $w^{k}_{uv}$ denote the edge weight from node $v$ to node $u$ in channel $k$, indicating the probability that latent factor $k$ causes the connection from node $v$ to node $u$. Here, channel $k$ corresponds to $k$-th latent factor. Our neighbor assigner is a multi-layer perceptron (MLP) and takes attributes of two connected nodes as input to predict the edge weight in different channels. Specifically, given the original attributes $\textbf{x}_u \in \mathbb{R}^{d}$ of node $u$ and its neighbor $v$'s attributes $\textbf{x}_v \in \mathbb{R}^{d}$, our neighbor assigner $f_{a}(\cdot)$ takes the concatenation of $\textbf{x}_u$ and $\textbf{x}_v$ as input to measure the importance of edge $e_{uv}$ for each latent factor. Then, we obtain $w^{k}_{uv}$ through the softmax operation:
\begin{equation}
\label{eq:assigner}
\bm{\alpha}_{uv} = f_{a}([\textbf{x}_u,\textbf{x}_v]), w^{k}_{uv} = \frac{\exp(\alpha^{k}_{uv})}{\sum_{j=1}^K \exp(\alpha^{j}_{uv})},
\end{equation}
where $\bm{\alpha}_{uv}=[\alpha^{1}_{uv}, \alpha^{2}_{uv}, ..., \alpha^{k}_{uv}], (1 \leq k \leq K)$ is the importance score vector of $e_{uv}$ corresponding to $K$ latent factors. 

Based on edge weights in $K$ channels, disentangled layers are capable of conducting multi-channel graph convolution to disentangle the sensitive attribute. Additionally, all edges in graph share parameters of the neighbor assigner, indicating the global and local level view of our neighbor assigner. 

\subsubsection{Disentangled layers}
Combining edge weights in $K$ channels, we employ disentangled layers to perform graph convolution in multi-channel. A disentangled layer consists of $K$ channels graph convolution with the same network structure and each channel corresponds to a distinct latent factor. For a disentangled layer, the processing can be summarized as two steps: (1) {\itshape Dimensions Reduction}: reducing the dimension of the original attribute and projecting attributes into different subspaces corresponding to latent factors. (2) {\itshape Multi-channel Graph Convolution}: following the message-passing mechanism, aggregating and updating node representation based on edge weights predicted by the neighbor assigner. 

For dimensions reduction, we use a linear layer as our reduction operation $\operatorname{RED}(\cdot)$. 
Given $\textbf{x}_u$ and the $k$-th reduction operation $\operatorname{RED}_{k}(\cdot)$, we project $\textbf{x}_u$ into the reduced node attribute $\textbf{r}^{k}_{u}=\operatorname{RED}_{k}(\textbf{x}_{u}),  \textbf{r}^{k}_{u} \in \mathbb{R}^{\frac{d_{h}}{K}}$. Here, we obtain $K$ reduced node attribute through $K$ independent reduction operation $\operatorname{RED}(\cdot)$.
For multi-channel graph convolution, we follow the widely used message-passing mechanism in GNNs, as shown in Eq.(\ref{eq:gnn}). Taking $\textbf{z}^{k,(0)}_{u}=\textbf{r}^{k}_{u}$, $\textbf{z}^{k,(0)}_{v}=\textbf{r}^{k}_{v}$, and $w^{k}_{uv}$ as input, the $l$-th layer of graph convolution in the $k$-th channel is formalized as follows:
\begin{equation}
\label{eq:channel_graph_conv}
\small
    \textbf{z}^{k,(l)}_{u} = \operatorname{UPD}^{(l)}_k(\{\textbf{z}^{k,(l-1)}_{u},
              \operatorname{AGG}^{(l)}_k(\{w^{k}_{uv}\textbf{z}^{k,(l-1)}_{v}:v\in \mathcal{N}(u)\})\}),
\end{equation}
where $\operatorname{UPD}^{(l)}_k(\cdot)$ and $\operatorname{AGG}^{(l)}_k(\cdot)$ are the $k$-th update and aggregation function in the $l$-th layer, respectively. Note that our disentangled layer appears similar to GAT~\cite{velivckovic2017graph}, but its purpose and actual functionality are entirely different. We utilize a $sum$ operator as our aggregation function, i.e., computing the elementwise summarization of the vectors in $\{w^{k}_{uv}\textbf{z}^{k,(l-1)}_{v}:v\in \mathcal{N}(u)\}$. To ensure numerical stability, we use $l_2$-norm to handle the aggregated and updated results, i.e., $\textbf{z}^{k,(l)}_{u}=\textbf{z}^{k,(l)}_{u}/\left\|\textbf{z}^{k,(l)}_{u}\right\|_2$, thereby generating the final representation $\textbf{z}^{k,(l)}_{u} \in \mathbb{R}^{\frac{d_{h}}{K}}$. 

For notation simplicity, we omit the superscript ``$(l)$'' of the disentangled node representation below, i.e., $\textbf{z}^{k,(l)}_{u}$ denote as $\textbf{z}^{k}_{u}$. Concatenating all $\textbf{z}^{k}_{u}$, we obtain the disentangled node representation of node $u$ in the $l$-th layer, denoted by $\textbf{h}_{u}=[\textbf{z}^{1}_{u},\textbf{z}^{2}_{u},...,\textbf{z}^{k}_{u}]$. For the disentangled representation of all nodes, $\textbf{Z}^{k}$ denote the disentangled representations of all nodes in $k$-th channel and $\textbf{H}=[\textbf{Z}^{1},\textbf{Z}^{2},...,\textbf{Z}^{k}]$ is the concatenation of node disentangled representations for all channels. Assume that each latent factor channel consists of three columns, $\textbf{H}$ can be transformed into a general representation with $d_{h}$ hidden dimensions, as follows:
\begin{equation}
\label{eq:hid_dimension}
    \textbf{H}=[\underbrace{\textbf{c}_{1},\textbf{c}_{2},\textbf{c}_{3}}_{\textbf{Z}^{1}},...,\underbrace{\textbf{c}_{d_{h}-2},\textbf{c}_{d_{h}-1},\textbf{c}_{d_{h}}}_{\textbf{Z}^{k}}]
\end{equation}
where $\textbf{c}_{i} \in \mathbb{R}^{n}, \forall i \in \{1,2,...,d_{h}\}$ is the disentangled representation of column feature form. However, the above process only considers disentanglement at the sample level, neglecting the independence among latent factors. To tackle this issue, we incorporate distance correlation~\cite{szekely2007measuring} and a channel discriminator~\cite{zhao2022exploring} into the optimization objectives of FairSAD. 

\subsection{Sensitive Attribute Masking}
Assuming perfect sensitive attribute disentanglement, where sensitive attributes are completely disentangled into an independent component, we can focus solely on processing the independent component corresponding to the sensitive attribute to further improve fairness. In this regard, the problem is transformed into the sensitive attribute-related component identification and the sensitive attribute decorrelation for this component. Inspired by FairVGNN~\cite{wang2022improving}, which masks original features to generate fair views, we utilize a learnable channel mask $f_m$ to identify the independent component associated with the sensitive attribute while reducing the impact of the sensitive attribute. 
Unlike FairVGNN~\cite{wang2022improving}, our masking process takes the disentangled node representation $\textbf{H}$ as input. Formally, the channel masking can be defined as follows:
\label{eq:channel_masking}
\begin{align}
\begin{split}
    \tilde{\textbf{H}}=\textbf{H} \odot \textbf{m}&=[\textbf{c}_{1}m_{1},\textbf{c}_{2}m_{2},\textbf{c}_{3}m_{3},...,\textbf{c}_{d_{h}}m_{d_{h}}]\\
    &=[\underbrace{\tilde{\textbf{c}}_{1},\tilde{\textbf{c}}_{2},\tilde{\textbf{c}}_{3}}_{\tilde{\textbf{Z}}^{1}},...,\underbrace{\tilde{\textbf{c}}_{d_{h-2}},\tilde{\textbf{c}}_{d_{h-1}},\tilde{\textbf{c}}_{d_{h}}}_{\tilde{\textbf{Z}}^{k}}],
\end{split}
\end{align}
where $\textbf{m}=[m_{1}, m_{2},...,m_{d_{h}}], \textbf{m} \in \mathbb{R}^{d_{h}}$ is the mask, $\tilde{\textbf{H}}$ is the final node representation for downstream tasks. 

A natural way to form the mask $\textbf{m}$ is to sample following Bernoulli distribution, i.e., $m_i\sim$ Bernoulli($p_i$), $\forall i \in \{1,2,...,d_{h}\}$. Here, $p_i$ denotes a learnable probability to mask a column feature channel. In this regard, we can learn a channel mask $f_m$. Due to the discreteness of masks, this process is non-differentiable. Thus, we approximate the Bernoulli distribution through the Gumbel-Softmax trick~\cite{jang2016categorical,wang2022improving}. 
The covariance constraint has
been employed in~\cite{dai2021say} to minimize the absolute covariance between the noisy
sensitive attribute and label prediction to achieve fairness. In our issue, we aim to learn a mask to adaptively identify the sensitive attribute-related component and decorrelate it. Thus, we regard the absolute covariance between the sensitive attribute $\textbf{s}$ and each masked column feature $\tilde{\textbf{c}}_{i}, \forall i \in \{1,2,...,d_{h}\}$ as loss function of $f_{m}$:
\begin{equation}
\label{eq:loss_mask_cov}
\small
    \mathcal{L}_{m}=\sum_{i=1}^{d_{h}} |Cov(\textbf{s}, \tilde{\textbf{c}}_{i})|=\sum_{i=1}^{d_{h}} |\mathbb{E}[(\textbf{s}-\mathbb{E}(\textbf{s}))(\tilde{\textbf{c}}_{i}-\mathbb{E}(\tilde{\textbf{c}}_{i}))]|,
\end{equation}
where $Cov(\cdot)$ is the covariance, $|\cdot|$ is the absolute value, $\mathbb{E}(\cdot)$ is the expectation operation. Overall, the core idea behind $f_{m}$ is to assign the minimal mask value to the sensitive attribute-related component. 

\subsection{Optimization Objectives}
The goal of FairSAD is to learn fair graph representation $\tilde{\textbf{H}}$ while preserving task-related information. In this regard, the optimization objectives of FairSAD can be divided into three parts: (1) {\itshape Downstream Tasks}: Objectives for downstream tasks ensure that the learned representation is informative and task-related. (2) {\itshape Disentanglement}: Objectives for disentanglement ensure the independence between latent factors. (3) {\itshape Decorrelation}: Objectives for decorrelation weaken the impact of the sensitive attribute-related component on final predictions. 
The loss function can be defined as follows:

\subsubsection{Downstream Tasks} Take the node classification task as an example, the optimization objective $\mathcal{L}_{c}$ for the node classification task is a binary cross-entropy function.

\subsubsection{Disentanglement} SAD already captures informative component representation corresponding to different factors, named micro-disentanglement~\cite{zheng2021adversarial}. To fully disentangle the sensitive attribute into an independent component, macro-disentanglement~\cite{zheng2021adversarial}, emphasizing independence between different components, need to be considered in SAD. Thus, we employ distance correlation as a regularizer and a channel discriminator $f_{d}$ as a supervisor to guide the training of FairSAD. 

Distance correlation~\cite{szekely2007measuring}, which is a measurement of dependence between random vectors, can characterize both the linear and non-linear relation. 
Thus, the loss function can be defined as follows:
\begin{equation}
\label{eq:loss_dist_cor}
\small
    \mathcal{L}_{dc}
    =\sum_{k_{1}=1}^{K} \sum_{k_{2}=k_{1}+1}^{K} \frac{dCov^{2}(\tilde{\textbf{Z}}^{k_{1}},\tilde{\textbf{Z}}^{k_{2}})}{\sqrt{dCov^{2}(\tilde{\textbf{Z}}^{k_{1}},\tilde{\textbf{Z}}^{k_{1}})dCov^{2}(\tilde{\textbf{Z}}^{k_{2}},\tilde{\textbf{Z}}^{k_{2}})}},
\end{equation}
where 
$dCov^{2}(\cdot)$ is the distance covariance and detailed calculation is shown in Appendix~\ref{apd:dc}. 

The channel discriminator $f_{d}$, which is a linear layer in this work, takes $\tilde{\textbf{Z}}^{k}$ as input and predicts the channel $\hat{y}^{'}_{v,k}$ of each node. Naturally, we construct the label $y^{'}_{v,k}$ using the channel index of $\tilde{\textbf{Z}}^{k}$. Assume that $\mathcal{V}_{T}$ is a set of labeled nodes for training, we utilize the cross-entropy function as the loss function, as follows:
\begin{equation}
\label{eq:loss_channel_discri}
    \mathcal{L}_{d}=-\frac{1}{|\mathcal{V}_{T}|} \sum_{v \in \mathcal{V}_{T}} \sum_{k=1}^{K} y^{'}_{v,k}\log\hat{y}^{'}_{v,k}.
\end{equation}

Note that we also utilize Eq.(\ref{eq:loss_channel_discri}) to separately optimize $f_d$. Based on Eq.(\ref{eq:loss_dist_cor}) and (\ref{eq:loss_channel_discri}), FairSAD can learn the disentangled representation considering both micro- and macro-disentanglement, thereby separating the sensitive attribute-related information into an independent component.

\subsubsection{Decorrelation} Decorrelation is achieved through the channel mask $f_m$, which aims to assign the minimum masking value to the sensitive attribute-related component. Our goal is to minimize correlations between the masked column features $\tilde{\textbf{c}}_{i}$ and the sensitive attribute $\textbf{s}$. Thus, the loss function is shown in Eq. (\ref{eq:loss_mask_cov}).

Assume that we have the neighbor assigner $f_{a}$, the disentangled GNNs backbone $f_{g}$ with dimension reduction and $l$-layer disentangled layer, the channel mask $f_{m}$, and the classifier $f_{c}$, the optimization objectives of FairSAD can be summarized as follows:
\begin{equation}
\label{eq:loss}
    \min_{\theta} \mathcal{L}=\mathcal{L}_{c}+\alpha(\mathcal{L}_{dc}+\mathcal{L}_{d})+\beta\mathcal{L}_{m},
\end{equation}
where $\theta=\{\theta_{f_{a}},\theta_{f_{g}},\theta_{f_{m}},\theta_{f_{c}}\}$ is the parameter set of $f_{a}$, $f_{g}$, $f_{m}$, and $f_{c}$. $\alpha$ and $\beta$ are hyperparameters that balance the contribution of disentanglement and decorrelation. To easily understand FairSAD, we present a training algorithm of FairSAD and a forward propagation algorithm of the disentangled layer in Appendix~\ref{apd:algo}.

\subsection{Theoretical Analysis}
In this subsection, we present the theoretical analysis to prove why the proposed method FairSAD learns fair graph representations $\tilde{\textbf{c}}_{i}$. Given an undirected attributed graph $\mathcal{G}=(\mathcal{V}, \mathcal{E}, \textbf{X})$, we can obtain the node disentangled representations $\textbf{H}=[\textbf{Z}^{1},\textbf{Z}^{2},...,\textbf{Z}^{k}]$ through sensitive attribute disentanglement in Section~\ref{subsec:SAD}. $\textbf{H}$ exhibits the sensitive attribute independence property, as shown in the following lemma.

\begin{lemma}[]
\label{lemma_1}
Assuming full disentanglement, where each channel representation $\textbf{H}$ is independent of the others, it follows that at most one channel representation $\textbf{Z}^{k}$ is related to the sensitive attribute $\textbf{s}$.
\end{lemma}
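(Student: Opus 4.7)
The plan is to argue by contradiction, leveraging the mutual independence of channels that full disentanglement provides. Suppose, contrary to the claim, that there exist two distinct indices $k_1 \neq k_2$ such that both $\textbf{Z}^{k_1}$ and $\textbf{Z}^{k_2}$ are statistically related to the sensitive attribute $\textbf{s}$, i.e., $I(\textbf{Z}^{k_1};\textbf{s})>0$ and $I(\textbf{Z}^{k_2};\textbf{s})>0$. I would first make precise what ``related to $\textbf{s}$'' means in this setting (nonzero mutual information, equivalently $\textbf{Z}^{k_i} \not\perp \textbf{s}$), since the conclusion is sensitive to this choice.

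Next, I would invoke the generative-factor interpretation that underlies the SAD construction: each channel $\textbf{Z}^{k}$ is a (deterministic) function of a latent factor $g_k$, and the full-disentanglement hypothesis asserts that $g_1,\dots,g_K$ are mutually independent, so that the $\textbf{Z}^{k}$ inherit mutual independence. Within this model, $\textbf{s}$ being a latent factor ``disentangled into one component'' means that $\textbf{s}$ itself can be expressed (perhaps up to a deterministic transformation) as a function of a single $g_{k^\star}$. I would then show via the data-processing inequality that $I(\textbf{Z}^{k};\textbf{s}) \le I(g_{k};g_{k^\star})$, and by independence of the factors, $I(g_k;g_{k^\star}) = 0$ for every $k \neq k^\star$. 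Hence $\textbf{Z}^{k}\perp \textbf{s}$ for all $k\neq k^\star$, and at most the single channel $\textbf{Z}^{k^\star}$ can carry information about $\textbf{s}$, contradicting the assumption that both $\textbf{Z}^{k_1}$ and $\textbf{Z}^{k_2}$ are related to $\textbf{s}$.

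The main obstacle is the conceptual gap between pairwise independence of the $\textbf{Z}^{k}$ and the stronger statement one really needs, namely that the joint distribution factorizes in a way that also decouples each $\textbf{Z}^{k}$ from $\textbf{s}$ whenever the underlying factor of $\textbf{s}$ lives in a different channel. Indeed, a naive reading (two variables can be independent yet both depend on a common source, as in the XOR example) would defeat the argument. I expect to handle this by being explicit that ``full disentanglement'' in the lemma is meant in the generative sense of \emph{mutually independent latent factors producing each channel}, not merely pairwise independence of the output representations; under that reading the data-processing inequality closes the gap cleanly. I would also note that if $\textbf{s}$ only partially aligns with a single latent factor, the lemma should be stated as ``at most one channel,'' which is consistent with the wording, since the other channels remain independent of $\textbf{s}$ regardless of whether any channel is perfectly correlated with it.
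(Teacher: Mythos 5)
Your proof is correct under the assumptions you make explicit, but it takes a genuinely different route from the paper's. The paper's proof is precisely the one-line contradiction you were wary of: it supposes several channels are related to $\textbf{s}$ and asserts that this ``implies correlations between channel representations,'' contradicting full disentanglement. That step is exactly the weak point you flagged: pairwise (even mutual) independence of the channels does not by itself preclude two channels from each depending on a common $\textbf{s}$ --- one can construct binary $\textbf{Z}^{1}\perp\textbf{Z}^{2}$ that are both marginally dependent on a binary $\textbf{s}$ (your XOR example is not quite the right witness, since there each channel is marginally independent of the source, but such constructions exist), so the paper's asserted implication is not rigorous as stated and the paper does nothing further to close that gap. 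You close it differently: you strengthen ``full disentanglement'' into a generative statement --- each channel is a function of its own latent factor, the factors are mutually independent, and $\textbf{s}$ is a function of a single factor $g_{k^\star}$ --- and then the data-processing inequality gives $I(\textbf{Z}^{k};\textbf{s})\le I(g_{k};g_{k^\star})=0$ for every $k\neq k^\star$, so at most the channel $k^\star$ can carry information about $\textbf{s}$. That argument is sound, but note its price: the alignment of $\textbf{s}$ with exactly one latent factor is an assumption beyond the lemma's literal hypothesis and it carries most of the load (it is close to, though not identical with, the conclusion, which concerns the learned channel representations rather than the factors). In short, the paper buys brevity at the cost of rigor, while your version buys rigor at the cost of an extra modeling assumption that should be stated explicitly in the lemma if your proof is adopted.
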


\begin{proof}
Applying the proof by contradiction, we assume the existence of multiple channel representations related to the sensitive attribute. This implies correlations between channel representations related to sensitive attributes, which contradicts our initial premise of full disentanglement. Consequently, we conclude that at most one channel representation $\textbf{Z}^{k}$ is related to $\textbf{s}$.  
\end{proof}

According to Lemma~\ref{lemma_1}, the sensitive attribute-related information is separated into an independent component, which eliminates the impact of the sensitive attribute on other channel representations $\textbf{Z}^{k}$. This constitutes the initial step in FairSAD towards improving fairness. Given the disentangled representation $\textbf{H}$, we obtain the final node representations $\tilde{\textbf{H}}$ through the sensitive attribute masking. We optimize the learnable mask using $\mathcal{L}_{m}$. In this context, we can prove that minimizing $\mathcal{L}_{m}$ helps our mask learn to identify the sensitive attribute-related component adaptively, resulting in further improvements in fairness. The details of the proof are as follows: 

\begin{proposition}[]
\label{proposition_1}
Let $\tilde{\textbf{c}}_{s}$ denote the sensitive attribute-related channel representation in $\tilde{\textbf{H}}$, minimizing $\mathcal{L}_{m}$ is minimizing the correlation between the sensitive attribute $\textbf{s}$ and $\tilde{\textbf{c}}_{s}$.
\end{proposition}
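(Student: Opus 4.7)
The plan is to combine Lemma 1 with a direct decomposition of the summation in $\mathcal{L}_{m}$. Under the full-disentanglement hypothesis, the $d_h$ masked column features $\tilde{\textbf{c}}_i$ split into those sitting inside $\tilde{\textbf{Z}}^{s}$ (the unique channel related to $\textbf{s}$, identified by Lemma 1) and those sitting inside the remaining $K-1$ channels $\tilde{\textbf{Z}}^{k}$ with $k\neq s$. I would rewrite
\begin{equation*}
\mathcal{L}_{m}=\underbrace{\sum_{i:\,\tilde{\textbf{c}}_i\in\tilde{\textbf{Z}}^{s}} |Cov(\textbf{s},\tilde{\textbf{c}}_i)|}_{\text{(I)}}+\underbrace{\sum_{k\neq s}\ \sum_{i:\,\tilde{\textbf{c}}_i\in\tilde{\textbf{Z}}^{k}} |Cov(\textbf{s},\tilde{\textbf{c}}_i)|}_{\text{(II)}},
\end{equation*}
and show that term (II) vanishes (or is an irreducible constant from FairSAD's perspective), so that minimizing $\mathcal{L}_{m}$ is effectively minimizing term (I), i.e.\ the covariance between $\textbf{s}$ and the sensitive-related channel $\tilde{\textbf{c}}_{s}$.

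For term (II), I would invoke Lemma 1 plus the standard fact that independence implies zero covariance: because $\tilde{\textbf{Z}}^{k}$ for $k\neq s$ is independent of $\textbf{s}$ under full disentanglement, each scalar column $\tilde{\textbf{c}}_i$ inside such a $\tilde{\textbf{Z}}^{k}$ is independent of $\textbf{s}$, so $\mathbb{E}[(\textbf{s}-\mathbb{E}\textbf{s})(\tilde{\textbf{c}}_i-\mathbb{E}\tilde{\textbf{c}}_i)]=0$. Consequently $\mathcal{L}_{m}=\sum_{i:\tilde{\textbf{c}}_i\in\tilde{\textbf{Z}}^{s}}|Cov(\textbf{s},\tilde{\textbf{c}}_i)|$, a quantity that depends only on the covariance structure of $\textbf{s}$ with the columns of $\tilde{\textbf{c}}_{s}$.

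To bridge covariance (what is optimized) and correlation (what is claimed), I would apply the standard identity $Corr(\textbf{s},\tilde{\textbf{c}}_i)=Cov(\textbf{s},\tilde{\textbf{c}}_i)/\sqrt{Var(\textbf{s})Var(\tilde{\textbf{c}}_i)}$. Since $\textbf{s}$ is fixed data, $Var(\textbf{s})$ is a positive constant, and the $\ell_2$-normalization applied inside the disentangled layer plus the $[0,1]$-valued Gumbel-Softmax mask bound $Var(\tilde{\textbf{c}}_i)$ above by a constant. Hence $|Cov(\textbf{s},\tilde{\textbf{c}}_i)|$ and $|Corr(\textbf{s},\tilde{\textbf{c}}_i)|$ differ only by a bounded positive multiplicative factor, so minimizing term (I) is equivalent, up to this factor, to minimizing the aggregate correlation between $\textbf{s}$ and the columns of $\tilde{\textbf{c}}_{s}$. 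A final sentence would note that summing absolute values over the columns of $\tilde{\textbf{c}}_{s}$ is a natural scalar proxy for the vector-level correlation between $\textbf{s}$ and $\tilde{\textbf{c}}_{s}$.

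The main obstacle is the step from Lemma 1's qualitative statement to the quantitative claim $Cov(\textbf{s},\tilde{\textbf{c}}_i)=0$ for $i$ outside $\tilde{\textbf{Z}}^{s}$: Lemma 1 only asserts that at most one \emph{channel} is related to $\textbf{s}$, which I would need to upgrade to componentwise (column-level) independence, and I should be careful that the Gumbel-Softmax mask $\textbf{m}$, being a function of the data, does not reintroduce a statistical dependence between $\textbf{s}$ and $\tilde{\textbf{c}}_i$ for $k\neq s$. I would handle this by conditioning on the learned mask values (treating $\textbf{m}$ as fixed at evaluation time, as is standard for Gumbel-Softmax) so that $\tilde{\textbf{c}}_i=m_i\textbf{c}_i$ inherits independence of $\textbf{s}$ directly from $\textbf{c}_i$, restoring the desired covariance identity.
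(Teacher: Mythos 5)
Your proposal follows essentially the same route as the paper's proof: invoke Lemma~\ref{lemma_1} to conclude that every masked column outside the sensitive-related channel is independent of $\textbf{s}$, so those covariance terms vanish and $\mathcal{L}_{m}$ collapses to the covariance with $\tilde{\textbf{c}}_{s}$ alone, which is then read as minimizing the correlation between $\textbf{s}$ and $\tilde{\textbf{c}}_{s}$. Your write-up is in fact slightly more careful than the paper's, since you keep the sum over the $d_{h}/K$ columns of the sensitive channel explicit and you flag the covariance-versus-correlation normalization and the data-dependence of the mask, details the paper passes over silently.
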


\begin{proof}
We assume that $s$-th channel representation is the sensitive attribute-related component. Based on Lemma~\ref{lemma_1}, we have the channel representations $\textbf{Z}^{k}, k \neq s$ independent of the sensitive attribute $\textbf{s}$. Thus, we also have final representations $\tilde{\textbf{c}}_{i}, \forall i \in \{1,2,...,d_{h}\}, i \neq s$ in $\tilde{\textbf{H}}$ independent of the sensitive attribute. Then,  Eq.(\ref{eq:loss_mask_cov}) can be summarized as follows:
\begin{equation}
\small
    \mathcal{L}_{m}=\sum_{i=1}^{d_{h}} |Cov(\textbf{s}, \tilde{\textbf{c}}_{i})|=|Cov(\textbf{s}, \tilde{\textbf{c}}_{s})|=|\mathbb{E}[(\textbf{s}-\mathbb{E}(\textbf{s}))(\tilde{\textbf{c}}_{s}-\mathbb{E}(\tilde{\textbf{c}}_{s}))]|,
\end{equation}

According to the above Equation, minimizing $\mathcal{L}_{m}$ is minimizing the correlation between the sensitive attribute $\textbf{s}$ and $\tilde{\textbf{c}}_{s}$. In this regard, the optimized mask can identify the sensitive attribute-related component adaptively. Meanwhile, a minimal mask value will be assigned to the sensitive attribute-related component due to minimizing the correlation between $\textbf{s}$ and $\tilde{\textbf{c}}_{s}$. 
\end{proof}

\begin{table}[!t]
    \caption{Statistic information of five real-world datasets.}
    \centering
    \renewcommand\arraystretch{1}
    \resizebox{\linewidth}{!}{
    \begin{tabular}{l|ccccc}
        \toprule
        \textbf{Dataset}   & \textbf{German}    & \textbf{Bail}    & \textbf{Credit}    & \textbf{Pokec-z}    & \textbf{Pokec-n} \\
        \midrule
        \#Nodes            & 1,000              & 18,876           & 30,000             & 67,796              & 66,569  \\
        \#Edges            & 22,242             & 321,308          & 1,436,858          & 617,958             & 583,616   \\
        \#Attr.            & 27                 & 18               & 13                 & 277                 & 266   \\
        Sens.              & Gender             & Race             & Age                & Region              & Region  \\
        \bottomrule
    \end{tabular}}
    \label{tab:statistic}
\end{table}

\section{Experiments}
In this section, we conduct experiments on five real-world datasets, including German, Bail, Credit~\cite{agarwal2021towards}, Pokec-z, and Pokec-n~\cite{takac2012data,dai2021say}. We answer the following two research questions through experiments: (1) How effectively will FairSAD improve fairness? (2) What degree of utility performance can FairSAD maintain in the downstream task?

\begin{table*}[!t]
\centering
\caption{Comparison results of FairSAD with baseline fairness methods. In each row, the best result is indicated in \textbf{bold}, while the runner-up result is marked with an \underline{underline}. OOM represents out-of-memory on a GPU with 24GB memory.}
\renewcommand\arraystretch{1}
\resizebox{0.98\textwidth}{!}{
\begin{tabular}{c|c|cccccc|c}
\toprule
\textbf{Datasets}                 & \textbf{Metrics} & \textbf{GCN} & \textbf{EDITS} & \textbf{Graphair} & \textbf{NIFTY}       & \textbf{FairGNN}     & \textbf{FairVGNN}    & \textbf{FairSAD} \\
\midrule
\multirow{4}{*}{\textbf{German}}  & AUC $(\uparrow)$              & 65.90 $\pm$  0.83         & 69.89 $\pm$ 3.23               & 47.03 $\pm$ 4.34                  & 67.77 $\pm$ 4.30                     & 67.35 $\pm$ 2.13                     & \textbf{72.38 $\pm$ 1.09}                     & \underline{70.39 $\pm$ 2.04}                 \\
                                  & F1 $(\uparrow)$              & 77.32 $\pm$ 1.20         & 82.01 $\pm$ 0.91               & \underline{82.27 $\pm$ 0.23}                  & 81.43 $\pm$ 0.54                     & 82.01 $\pm$ 0.26                     & 81.94 $\pm$ 0.26                     & \textbf{82.30 $\pm$ 0.10}                 \\
                                  & \cellcolor{Gray}$\Delta_{DP} (\downarrow)$    & \cellcolor{Gray}36.29 $\pm$ 4.64         & \cellcolor{Gray}2.38 $\pm$ 1.36                & \cellcolor{Gray}\underline{0.56 $\pm$ 1.11}                   & \cellcolor{Gray}2.64 $\pm$ 2.25                     & \cellcolor{Gray}3.49 $\pm$ 2.15                     & \cellcolor{Gray}1.44 $\pm$ 2.04                     & \cellcolor{Gray}\textbf{0.25 $\pm$ 0.51}                 \\
                                  & \cellcolor{Gray}$\Delta_{EO} (\downarrow)$    & \cellcolor{Gray}31.35 $\pm$ 4.39         & \cellcolor{Gray}3.03 $\pm$ 1.77                & \cellcolor{Gray}\underline{0.17 $\pm$ 0.29}                   & \cellcolor{Gray}2.52 $\pm$ 2.88                     & \cellcolor{Gray}3.40 $\pm$ 2.15                     & \cellcolor{Gray}1.51 $\pm$ 2.11                     & \cellcolor{Gray}\textbf{0.02 $\pm$ 0.04}                 \\
\midrule
\multirow{4}{*}{\textbf{Bail}}    & AUC $(\uparrow)$             & 87.13 $\pm$ 0.31         & \underline{87.92 $\pm$ 1.83}               & 65.85 $\pm$ 20.61                  & 79.62 $\pm$ 1.80                     & 87.27 $\pm$ 0.76                     & 87.05 $\pm$ 0.39                     & \textbf{88.53 $\pm$ 0.62}                 \\
                                  & F1 $(\uparrow)$              & 78.98 $\pm$ 0.67         & \underline{79.45 $\pm$ 1.48}               & 55.00 $\pm$ 29.25                  & 67.19 $\pm$ 2.63                     & 77.67 $\pm$ 1.33                     & \textbf{79.56 $\pm$ 0.29}                     & 78.22 $\pm$ 0.71                 \\
                                  & \cellcolor{Gray}$\Delta_{DP} (\downarrow)$    & \cellcolor{Gray}9.18 $\pm$ 0.59          & \cellcolor{Gray}8.03 $\pm$ 0.97                & \cellcolor{Gray}4.33 $\pm$ 3.55                    & \cellcolor{Gray}\underline{3.52 $\pm$ 0.72}                     & \cellcolor{Gray}6.72 $\pm$ 0.60                     & \cellcolor{Gray}6.31 $\pm$ 0.77                     & \cellcolor{Gray}\textbf{2.97 $\pm$ 2.22}                 \\
                                  & \cellcolor{Gray}$\Delta_{EO} (\downarrow)$    & \cellcolor{Gray}4.43 $\pm$ 0.37          & \cellcolor{Gray}5.80 $\pm$ 0.73                & \cellcolor{Gray}2.85 $\pm$ 2.55                    & \cellcolor{Gray}\underline{2.82 $\pm$ 0.82}                     & \cellcolor{Gray}4.49 $\pm$ 1.00                     & \cellcolor{Gray}5.12 $\pm$ 1.40                     & \cellcolor{Gray}\textbf{2.30 $\pm$ 2.12}                 \\
\midrule
\multirow{4}{*}{\textbf{Credit}}  & AUC $(\uparrow)$             & 70.71 $\pm$ 7.72         & 69.85 $\pm$ 1.43               & 53.21 $\pm$ 5.83                  & \textbf{72.16 $\pm$ 0.12}                     & 71.95 $\pm$ 1.43                     & 71.34 $\pm$ 0.79                     & \underline{72.09 $\pm$ 1.03}                 \\
                                  & F1 $(\uparrow)$              & 84.24 $\pm$ 1.32         & \underline{84.67 $\pm$ 1.59}               & 72.98 $\pm$ 9.07                  & 81.74 $\pm$ 0.04                     & 81.84 $\pm$ 1.19                     & 78.75 $\pm$ 0.47                     & \textbf{87.36 $\pm$ 0.19}                 \\
                                  & \cellcolor{Gray}$\Delta_{DP} (\downarrow)$    & \cellcolor{Gray}10.19 $\pm$ 0.98         & \cellcolor{Gray}5.57 $\pm$ 1.73                & \cellcolor{Gray}7.59 $\pm$ 7.42                   & \cellcolor{Gray}11.71 $\pm$ 0.04                     & \cellcolor{Gray}12.64 $\pm$ 2.11                     & \cellcolor{Gray}\underline{3.46 $\pm$ 2.19}                     & \cellcolor{Gray}\textbf{2.50 $\pm$ 2.01}                 \\
                                  & \cellcolor{Gray}$\Delta_{EO} (\downarrow)$    & \cellcolor{Gray}9.48 $\pm$ 0.61          & \cellcolor{Gray}2.41 $\pm$ 2.36                & \cellcolor{Gray}7.83 $\pm$ 6.97                   & \cellcolor{Gray}9.42 $\pm$ 0.04                     & \cellcolor{Gray}10.41 $\pm$ 2.03                     & \cellcolor{Gray}\underline{1.91 $\pm$ 0.92}                     & \cellcolor{Gray}\textbf{1.19 $\pm$ 1.55}                 \\
\midrule
\multirow{4}{*}{\textbf{Pokec-z}} & AUC $(\uparrow)$             & \textbf{76.42 $\pm$ 0.13}         & OOM               & 65.63 $\pm$ 0.38                  & 71.59 $\pm$ 0.17                      & 76.02 $\pm$ 0.15                     & 75.52 $\pm$ 0.06                     & \underline{76.33 $\pm$ 0.44}                 \\
                                  & F1 $(\uparrow)$              & \underline{70.32 $\pm$ 0.20}         & OOM               & 63.71 $\pm$ 1.19                  & 67.13 $\pm$ 1.66                     & 68.84 $\pm$ 3.46                     & \textbf{70.45 $\pm$ 0.57}                     & 69.03 $\pm$ 0.91                 \\
                                  & \cellcolor{Gray}$\Delta_{DP} (\downarrow)$    & \cellcolor{Gray}3.91 $\pm$ 0.35          & \cellcolor{Gray}OOM               & \cellcolor{Gray}\underline{1.59 $\pm$ 0.85}                   & \cellcolor{Gray}3.06 $\pm$ 1.85                     & \cellcolor{Gray}2.93 $\pm$ 2.83                     & \cellcolor{Gray}3.30 $\pm$ 0.87                     & \cellcolor{Gray}\textbf{0.97 $\pm$ 0.59}                 \\
                                  & \cellcolor{Gray}$\Delta_{EO} (\downarrow)$    & \cellcolor{Gray}4.59 $\pm$ 0.34          & \cellcolor{Gray}OOM               & \cellcolor{Gray}\underline{1.80 $\pm$ 0.60}                   & \cellcolor{Gray}3.86 $\pm$ 1.65                     & \cellcolor{Gray}2.04 $\pm$ 2.27                     & \cellcolor{Gray}3.19 $\pm$ 1.00                     & \cellcolor{Gray}\textbf{1.40 $\pm$ 0.65}                 \\
\midrule
\multirow{4}{*}{\textbf{Pokec-n}} & AUC $(\uparrow)$            & \textbf{73.87 $\pm$ 0.08}         & OOM               & 64.20 $\pm$ 0.92                  & 69.43 $\pm$ 0.31                     & 73.49 $\pm$ 0.28                     & 72.72 $\pm$ 0.93                     & \underline{73.74 $\pm$ 0.54}                 \\
                                  & F1 $(\uparrow)$              & \textbf{65.55 $\pm$ 0.13}         & OOM               & 55.63 $\pm$ 1.42                  & 61.55 $\pm$ 1.05                     & \underline{64.80 $\pm$ 0.89}                     & 62.35 $\pm$ 1.14                     & 63.33 $\pm$ 2.49                 \\
                                  & \cellcolor{Gray}$\Delta_{DP} (\downarrow)$    & \cellcolor{Gray}2.83 $\pm$ 0.46          & \cellcolor{Gray}OOM               & \cellcolor{Gray}4.77 $\pm$ 2.85                  & \cellcolor{Gray}5.96 $\pm$ 1.80                     & \cellcolor{Gray}\underline{2.26 $\pm$ 1.19}                     & \cellcolor{Gray}4.38 $\pm$ 1.73                     & \cellcolor{Gray}\textbf{1.88 $\pm$ 1.52}                 \\
                                  & \cellcolor{Gray}$\Delta_{EO} (\downarrow)$    & \cellcolor{Gray}3.66 $\pm$ 0.43          & \cellcolor{Gray}OOM               & \cellcolor{Gray}4.20 $\pm$ 3.62                  & \cellcolor{Gray}7.75 $\pm$ 1.53                     & \cellcolor{Gray}\underline{3.21 $\pm$ 2.28}                     & \cellcolor{Gray}6.74 $\pm$ 1.87                     & \cellcolor{Gray}\textbf{2.95 $\pm$ 1.83}                 \\
\bottomrule
\end{tabular}}
\label{tab:comparison}
\end{table*}

\subsection{Experimental Settings}
In this subsection, we give a brief overview of experimental settings. More details about this are shown in Appendix~\ref{apd:exp}.

\subsubsection{Datasets} We conduct experiments on five commonly used datasets, including German, Bail, Credit~\cite{agarwal2021towards}, Pokec-z, and Pokec-n~\cite{dai2021say}. The statistics of datasets
are shown in Table~\ref{tab:statistic}.

\subsubsection{Evaluation Metrics} We use AUC and F1 scores to evaluate the utility performance. To evaluate fairness, we use two commonly used fairness metrics, i.e., $\Delta_{DP} = \lvert P(\hat{y}=1 \lvert s=0) - P(\hat{y}=1 \lvert s=1) \rvert$~\cite{dwork2012fairness} and $\Delta_{EO} = \lvert P(\hat{y}=1 \lvert y=1,s=0) - P(\hat{y}=1 \lvert y=1,s=1) \rvert$~\cite{hardt2016equality}. $\hat{y}$ and $y$ denote the node label prediction and ground truth, respectively. For $\Delta_{DP}$ and $\Delta_{EO}$, a smaller value indicates a fairer model prediction. 

\subsubsection{Baselines} We compare the performance of FairSAD with five baseline methods, i.e., EDITS~\cite{dong2022edits}, Graphair~\cite{ling2022learning}, NIFTY~\cite{agarwal2021towards}, FairGNN~\cite{dai2021say}, and FairVGNN~\cite{wang2022improving}.

\subsubsection{Implementation Details} We conduct all experiments 5 times and reported average results. For a fair comparison, we tune hyperparameters for all methods according to the performance on the validation set. For FairSAD, we use a 1-layer disentangled layer with hidden dimensions $d_{h}=16$ and set $K=4$ for all datasets. We set $\alpha$, $\beta$ as $\{0.1, 0.001, 0.5, 0.001, 0.05\}$, $\{1.0, 0.2, 0.1, 0.05, 0.001\}$ for German, Bail, Credit, Pokec-z, and Pokec-n datasets, respectively. For all baselines, we follow the searching approach in~\cite{wang2022improving} to select the best configuration of hyperparameters. More details about this are shown in Appendix~\ref{apd:details}.

\subsection{Comparison Study}
We compare the performance of FairSAD with five baseline methods and vanilla GCN on the node classification task. Apart from FairSAD, all baseline methods utilize a 1-layer GCN as the backbone. This is because FairSAD is built upon our customized backbone, known as disentangled layers. Table~\ref{tab:comparison} summarizes the comparison results of FairSAD with all baseline methods on the node classification task. We observe that FairSAD outperforms all baseline methods across all evaluation metrics in most cases. 

 Notably, FairSAD demonstrates superior fairness performance (i.e., $\Delta_{DP}$ and $\Delta_{EO}$), as evidenced by the significant margin over all baseline methods across all datasets. This enhanced fairness can be attributed to two primary reasons: (1) SAD disentangles the sensitive attribute-related information into an independent component, alleviating its impact on other components. (2) Our channel masking mechanism reduces the influence of the sensitive attribute-related component on the final predictions. Simultaneously, FairSAD excels in utility performance, surpassing other methods in most cases. This outcome suggests the preservation of task-related information. Two potential explanations support these results: (1) FairSAD enhances fairness by eliminating the influence of the sensitive attribute on other components and weakening the impact of the sensitive attribute-related component on final predictions. Such a design facilitates the preservation of task-related information related to sensitive attributes. (2) Enjoying the advantage of disentanglement, FairSAD captures latent factors behind data, which simplifies downstream prediction tasks and results in better utility performance. To sum up, the experimental results demonstrate the effectiveness of FairSAD in improving fairness while preserving task-related information.

\subsection{Ablation Study}
We conduct ablation studies to gain insights into the effect of each component of FairSAD on improving fairness. Specifically, we denote FairSAD without SAD and sensitive attribute masking as ``FairSAD w/o D'' and ``FairSAD w/o M'', respectively. Note that FairSAD w/o D sets $K=1$ and removes $f_{a}$, $\mathcal{L}_{dc}$, and $\mathcal{L}_{d}$. Table~\ref{tab:ablation} presents ablation results on Credit, Pokec-z, and Pokec-n datasets. We observe that FairSAD performs better than two ablation variants on both fairness and utility performance, indicating the effectiveness of SAD and sensitive attribute masking. In Credit and Pokec-z datasets, FairSAD w/o M performs worse than FairSAD w/o D on both fairness and utility performance. Conversely, the opposite holds on the Pokec-n dataset. This result demonstrates that as the dataset undergoes changes, the contributions of SAD and sensitive attribute masking modules to the utility and fairness performance of FairSAD vary.

As shown in Figure~\ref{fig:ablation}, we further study the effect of micro- and macro-disentanglement on the performance of FairSAD. We remove $f_a$ and set $K=1$, denoted as ``FairSAD w/o Micro'', and remove $\mathcal{L}_{dc}$, and $\mathcal{L}_{d}$, denoted as ``FairSAD w/o Macro''. We observe that FairSAD w/o Micro performs less effectively than FairSAD w/o Macro in terms of both utility and fairness performance, indicating that micro-disentanglement plays a more crucial role in enhancing the performance of FairSAD. Furthermore, this observation underscores the foundational importance of micro-disentanglement in the overall disentanglement process.

\begin{table}[!t]
\caption{Ablation results on Credit, Pokec-z, and Pokec-n.}
\resizebox{\linewidth}{!}{
\begin{tabular}{c|c|ccc}
\toprule
\textbf{Datasets}                 & \textbf{Metrics}  & \textbf{FairSAD w/o D}  & \textbf{FairSAD w/o M}  & \textbf{FairSAD} \\
\midrule
\multirow{4}{*}{\textbf{Credit}}  & AUC $(\uparrow)$              & 70.52 $\pm$ 1.15        & 68.81 $\pm$ 3.48        & \textbf{72.09 $\pm$ 1.03} \\
                                  & F1 $(\uparrow)$               & 85.83 $\pm$ 1.79        & 82.90 $\pm$ 3.70        & \textbf{87.36 $\pm$ 0.19} \\
                                  & \cellcolor{Gray}$\Delta_{DP} (\downarrow)$     & \cellcolor{Gray}\textbf{2.44 $\pm$ 1.77}         & \cellcolor{Gray}6.53 $\pm$ 4.65         & \cellcolor{Gray}2.50 $\pm$ 2.01  \\
                                  & \cellcolor{Gray}$\Delta_{EO} (\downarrow)$     & \cellcolor{Gray}2.10 $\pm$ 1.17         & \cellcolor{Gray}5.02 $\pm$ 4.13         & \cellcolor{Gray}\textbf{1.19 $\pm$ 1.55}  \\
\midrule
\multirow{4}{*}{\textbf{Pokec-z}} & AUC $(\uparrow)$              & 76.26 $\pm$ 0.39        & 75.76 $\pm$ 0.51        & \textbf{76.33 $\pm$ 0.44} \\
                                  & F1 $(\uparrow)$               & 67.82 $\pm$ 0.62        & \textbf{69.59 $\pm$ 0.82}        & 69.03 $\pm$ 0.91 \\
                                  & \cellcolor{Gray}$\Delta_{DP} (\downarrow)$     & \cellcolor{Gray}1.90 $\pm$ 0.65         & \cellcolor{Gray}4.16 $\pm$ 1.81         & \cellcolor{Gray}\textbf{0.97 $\pm$ 0.59}  \\
                                  & \cellcolor{Gray}$\Delta_{EO} (\downarrow)$     & \cellcolor{Gray}\textbf{1.38 $\pm$ 0.88}         & \cellcolor{Gray}4.67 $\pm$ 1.93         & \cellcolor{Gray}1.40 $\pm$ 0.65  \\
\midrule
\multirow{4}{*}{\textbf{Pokec-n}} & AUC $(\uparrow)$              & 71.90 $\pm$ 1.61        & 73.02 $\pm$ 0.60        & \textbf{73.74 $\pm$ 0.54} \\
                                  & F1 $(\uparrow)$               & 61.58 $\pm$ 1.63        & \textbf{64.18 $\pm$ 1.92}        & 63.33 $\pm$ 2.49 \\
                                  & \cellcolor{Gray}$\Delta_{DP} (\downarrow)$     & \cellcolor{Gray}5.79 $\pm$ 0.90         & \cellcolor{Gray}2.46 $\pm$ 1.25         & \cellcolor{Gray}\textbf{1.88 $\pm$ 1.52}  \\
                                  & \cellcolor{Gray}$\Delta_{EO} (\downarrow)$     & \cellcolor{Gray}7.92 $\pm$ 1.00         & \cellcolor{Gray}3.47 $\pm$ 2.25         & \cellcolor{Gray}\textbf{2.95 $\pm$ 1.83}  \\
\bottomrule
\end{tabular}}
\label{tab:ablation}
\end{table}

\begin{figure}[!t]
\centering
\includegraphics[width=0.95\linewidth]{./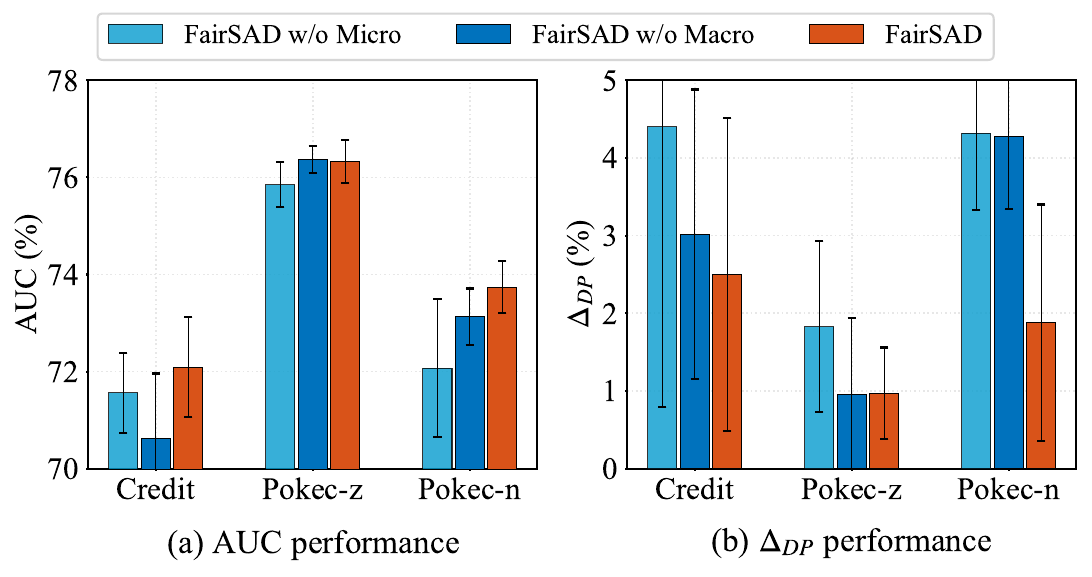}
\caption{Ablation analysis w.r.t. micro- and macro-disentanglement on Credit, Pokec-z, and Pokec-n.}
\label{fig:ablation}
\end{figure}

\subsection{Parameters Sensitivity}
We investigate the sensitivity of FairSAD w.r.t. two hyperparameters, i.e., $\alpha$ and $\beta$. In FairSAD, $\alpha$ and $\beta$ balance the contribution of disentanglement and decorrelation. Specifically, we vary the values of $\alpha$ and $\beta$ as $\{0.001, 0.01, 0.1, 0.5, 1, 5, 10\}$ on Credit, Pokec-z datasets. Figures~\ref{fig:paras_sens} presents the results of the parameter sensitivity analysis. We make the following observation: (1) The overall performance of FairSAD remains stable across a wide range of variations in $\alpha$ and $\beta$. (2) When the values of $\alpha$ and $\beta$ are relatively small, the performance (F1 and $\Delta_{DP}$) of FairSAD is inferior compared to scenarios where the values of $\alpha$ and $\beta$ are relatively large. This observation highlights the equal contribution of disentanglement and decorrelation to FairSAD. To guarantee both the utility and fairness, $\alpha$ and $\beta$ are better to be selected from 0.01 to 1.0.   

\begin{figure}[t]
  \centering
  \subfigure[Credit]{
  \begin{minipage}[b]{\columnwidth}
  \includegraphics[width=0.48\columnwidth]{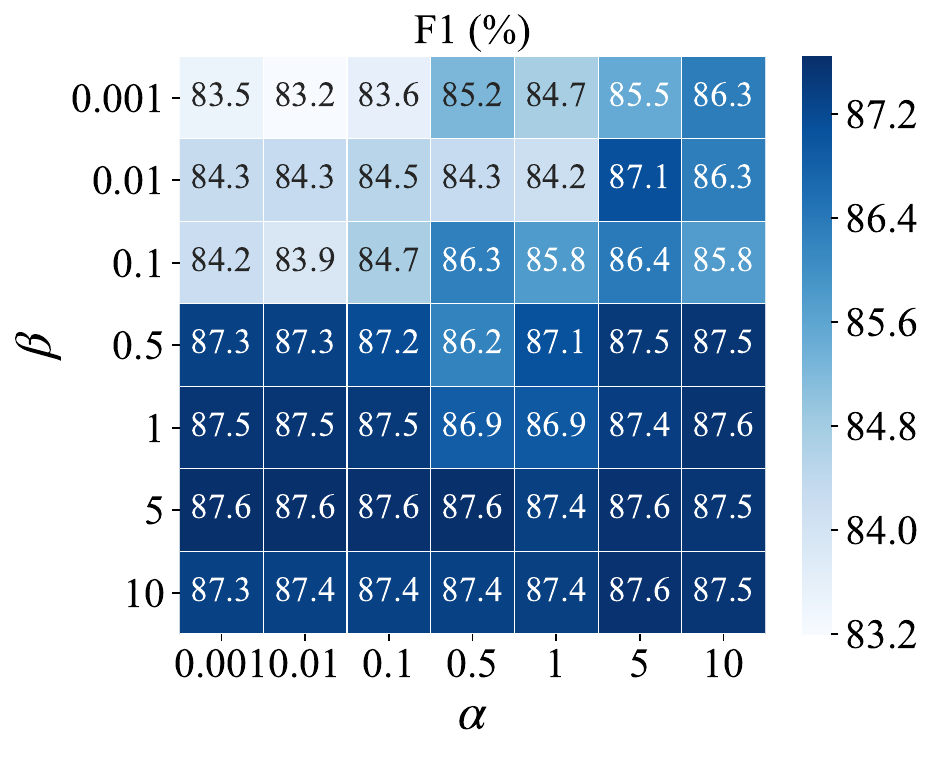}
  \includegraphics[width=0.48\columnwidth]{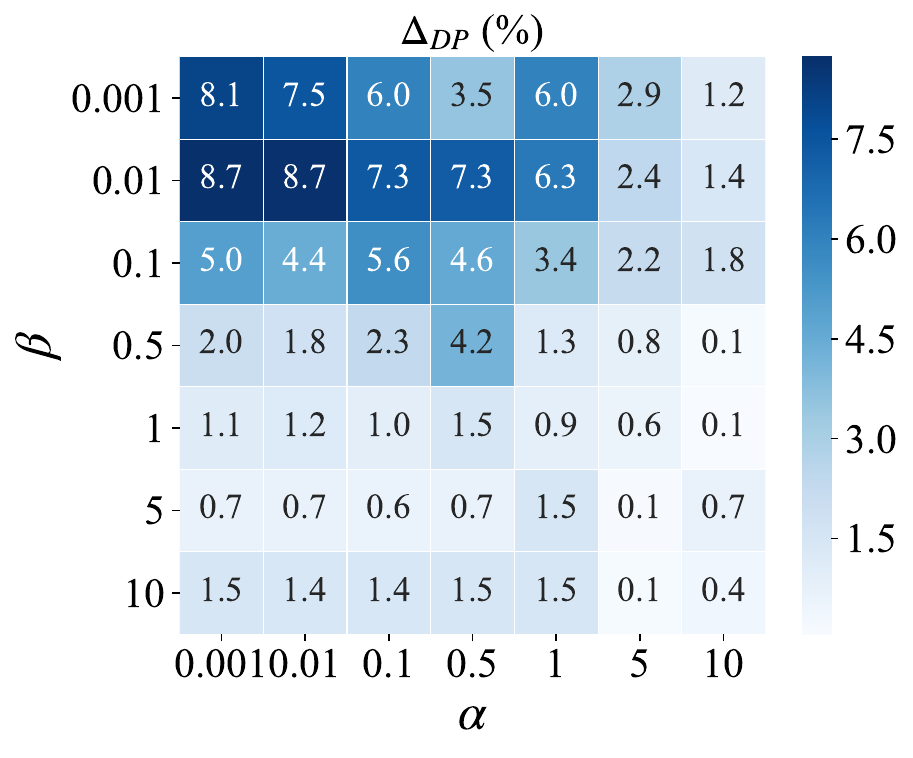}
  \end{minipage}
  }
\hfill
\subfigure[Pokec-z]{
  \begin{minipage}[b]{\columnwidth}
  \includegraphics[width=0.48\columnwidth]{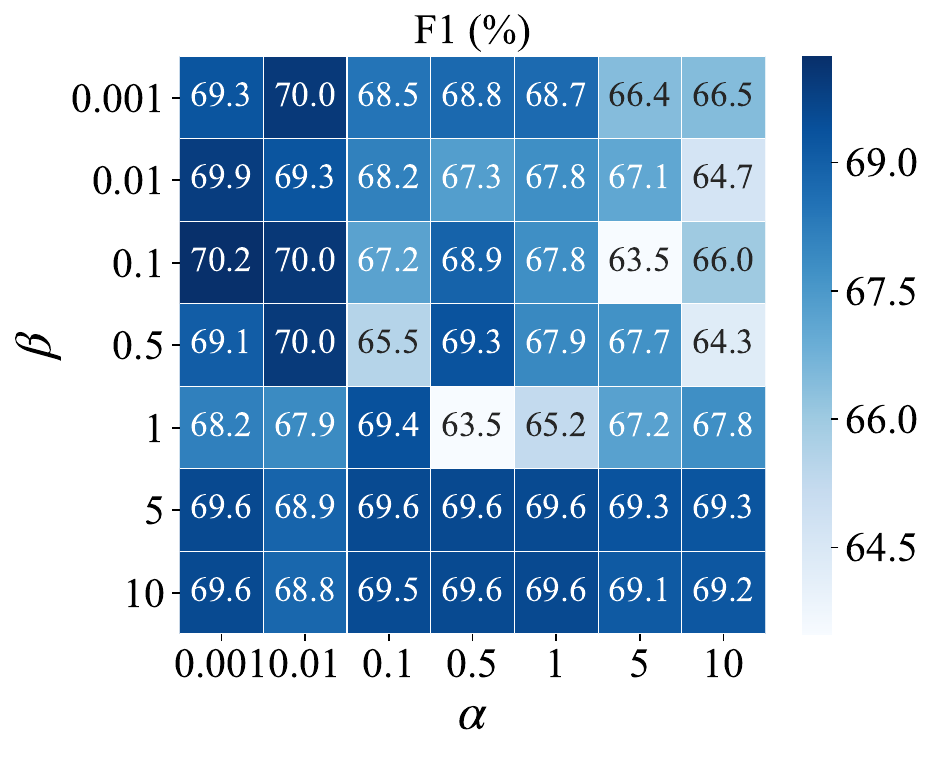}
  \includegraphics[width=0.48\columnwidth]{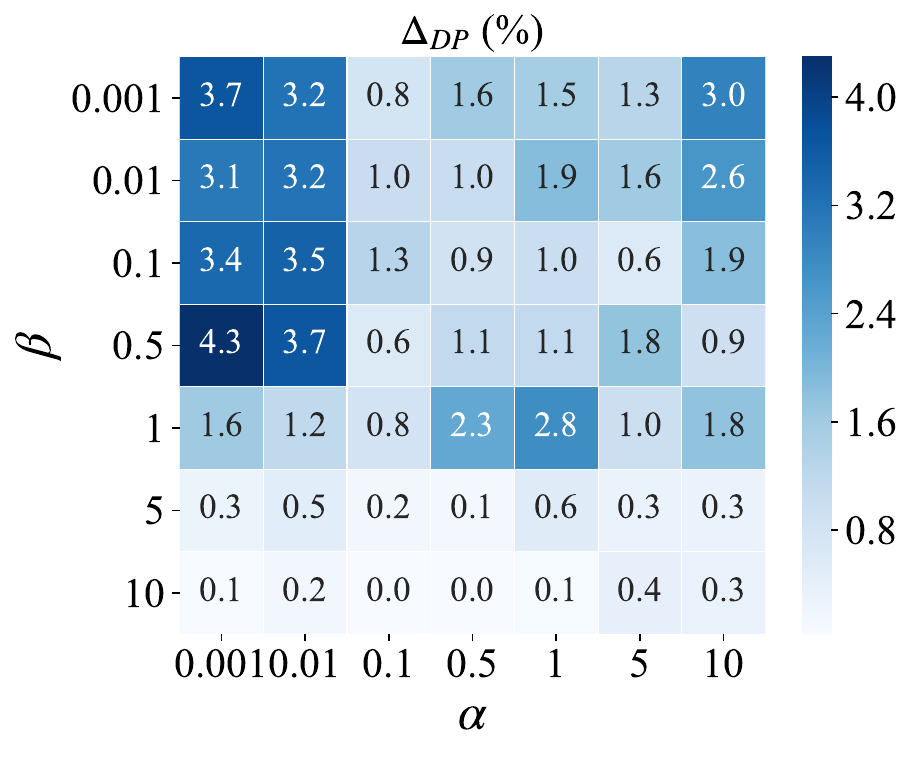}
  \end{minipage}
  }
\caption{Parameters sensitivity results on two datasets.}
\label{fig:paras_sens}
\end{figure}

\subsection{Further Probe}
To further understand FairSAD, we investigate the effect of channels $K$ and backbone layers $l$ on the performance of FairSAD. Due to out-of-memory on a GPU with 24GB memory, we only conduct additional experiments on German, Bail, and Pokec-n. 

\subsubsection{Effect of Channels $K$} We vary the value of $K$ as $\{1,2,4,8,16\}$ and fix other hyperparameters as the same as the comparison study. Figure~\ref{fig:apd_channel}
 presents hyperparameter analysis results w.r.t. $K$. However, please note that Figure~\ref{fig:apd_channel} does not include results for $K=16$ on Pokec-n due to out-of-memory issues on a GPU with 24GB memory. We can observe that the utility (AUC) performance of FairSAD first increases and then decreases with the increase of channels $K$. This pattern suggests that a suitable $K$ (e.g., $K=4$) may be the true number of latent factors underlying the data, resulting in the best utility performance. Meanwhile, as the increase of $K$, the fairness ($\Delta_{DP}$) performance improves. This demonstrates that a larger $K$ (e.g., $K \geq 4$) value is beneficial for disentangling the sensitive attribute, thereby mitigating its impact on other components. Overall, to balance the utility and fairness performance, selecting $K$ as 4 is preferable for our experimental datasets. 

\begin{figure}[!t]
\centering
\includegraphics[width=0.95\linewidth]{./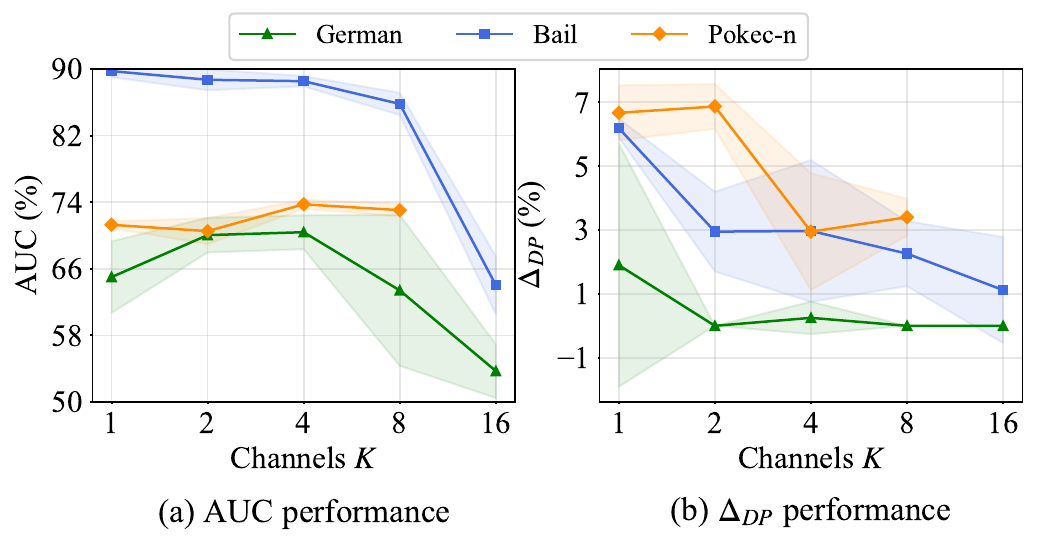}
\caption{Hyperparameter analysis w.r.t. channels $K$ on German, Bail, and Pokec-n.}
\label{fig:apd_channel}
\end{figure}

\begin{figure}[!t]
\centering
\includegraphics[width=0.95\linewidth]{./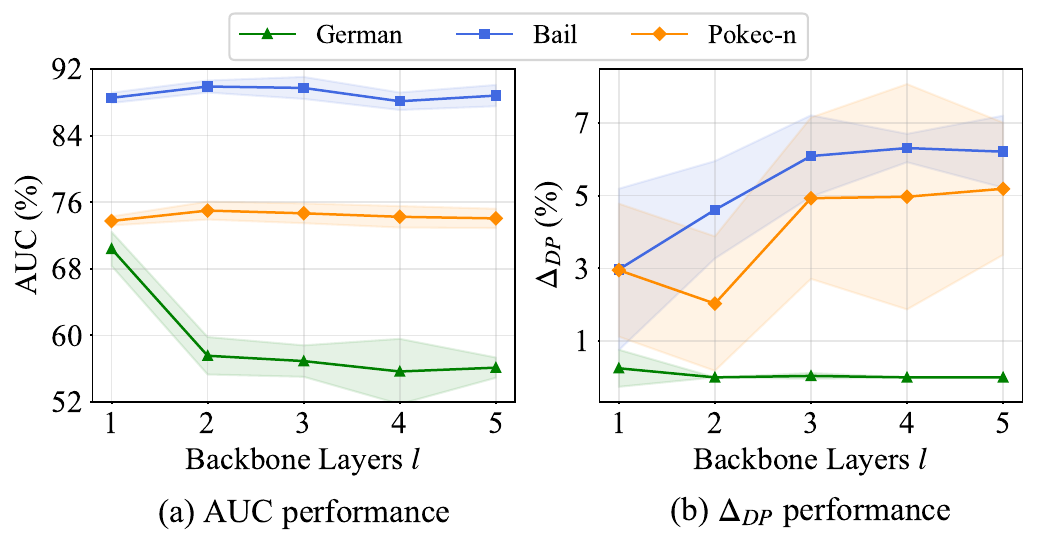}
\caption{Hyperparameter analysis w.r.t. backbone layers $l$ on German, Bail, and Pokec-n.}
\label{fig:apd_layer}
\end{figure}

\subsubsection{Effect of Backbone Layers $l$}
We fix other hyperparameters as the same as the comparison study and vary the value of $l$ as $\{1,2,3,4,5\}$. Figure~\ref{fig:apd_layer} presents hyperparameter analysis results w.r.t. $l$. We make the following observations: (1) FairSAD also suffers from performance degradation with the increase of the backbone layer, indicating the over-smoothing issue of FairSAD. (2) As the number of layers increases, the fairness of FairSAD deteriorates. One potential explanation for this phenomenon is that the poor disentanglement performance caused by over-smoothing issues leads to a degradation in fairness. 

\section{Conclusion}
In this paper, we study the problem of learning fair graph representations while preserving task-related information. Inspired by DRL, we propose a fair graph representation learning framework built upon disentanglement, namely FairSAD. The key insight behind FairSAD is to separate the sensitive attribute into the independent component via sensitive attribute disentanglement and then eliminate correlation with the sensitive attribute through sensitive attribute masking. Due to this design, FairSAD exhibits the advantage of improving fairness as well as preserving task-related information, resulting in superior performance on the downstream task. To our best knowledge, this is the first work to improve fairness in graph-structured data via disentanglement. Experiments on five real-world datasets demonstrate that FairSAD outperforms all baseline methods in terms of both fairness and utility performance.

\section{Acknowledgements}
The research is supported by the National Key R$\&$D Program of China under grant No. 2022YFF0902500, the Guangdong Basic and Applied Basic Research Foundation, China (No. 2023A1515011050), Shenzhen Research Project (KJZD20231023094501003), Ant Group through Ant Research Program.

\bibliographystyle{ACM-Reference-Format}
\balance
\bibliography{main}


\begin{thebibliography}{53}


\ifx \showCODEN    \undefined \def \showCODEN     #1{\unskip}     \fi
\ifx \showDOI      \undefined \def \showDOI       #1{#1}\fi
\ifx \showISBNx    \undefined \def \showISBNx     #1{\unskip}     \fi
\ifx \showISBNxiii \undefined \def \showISBNxiii  #1{\unskip}     \fi
\ifx \showISSN     \undefined \def \showISSN      #1{\unskip}     \fi
\ifx \showLCCN     \undefined \def \showLCCN      #1{\unskip}     \fi
\ifx \shownote     \undefined \def \shownote      #1{#1}          \fi
\ifx \showarticletitle \undefined \def \showarticletitle #1{#1}   \fi
\ifx \showURL      \undefined \def \showURL       {\relax}        \fi
\providecommand\bibfield[2]{#2}
\providecommand\bibinfo[2]{#2}
\providecommand\natexlab[1]{#1}
\providecommand\showeprint[2][]{arXiv:#2}

\bibitem[Agarwal et~al\mbox{.}(2021)]%
        {agarwal2021towards}
\bibfield{author}{\bibinfo{person}{Chirag Agarwal}, \bibinfo{person}{Himabindu Lakkaraju}, {and} \bibinfo{person}{Marinka Zitnik}.} \bibinfo{year}{2021}\natexlab{}.
\newblock \showarticletitle{Towards a unified framework for fair and stable graph representation learning}. In \bibinfo{booktitle}{\emph{Uncertainty in Artificial Intelligence}}. PMLR, \bibinfo{pages}{2114--2124}.
\newblock


\bibitem[Alemi et~al\mbox{.}(2016)]%
        {alemi2016deep}
\bibfield{author}{\bibinfo{person}{Alexander~A Alemi}, \bibinfo{person}{Ian Fischer}, \bibinfo{person}{Joshua~V Dillon}, {and} \bibinfo{person}{Kevin Murphy}.} \bibinfo{year}{2016}\natexlab{}.
\newblock \showarticletitle{Deep variational information bottleneck}.
\newblock \bibinfo{journal}{\emph{arXiv preprint arXiv:1612.00410}} (\bibinfo{year}{2016}).
\newblock


\bibitem[Asuncion and Newman(2007)]%
        {asuncion2007uci}
\bibfield{author}{\bibinfo{person}{Arthur Asuncion} {and} \bibinfo{person}{David Newman}.} \bibinfo{year}{2007}\natexlab{}.
\newblock \bibinfo{title}{UCI machine learning repository}.
\newblock
\newblock


\bibitem[Bengio et~al\mbox{.}(2013)]%
        {bengio2013representation}
\bibfield{author}{\bibinfo{person}{Yoshua Bengio}, \bibinfo{person}{Aaron Courville}, {and} \bibinfo{person}{Pascal Vincent}.} \bibinfo{year}{2013}\natexlab{}.
\newblock \showarticletitle{Representation learning: A review and new perspectives}.
\newblock \bibinfo{journal}{\emph{IEEE transactions on pattern analysis and machine intelligence}} \bibinfo{volume}{35}, \bibinfo{number}{8} (\bibinfo{year}{2013}), \bibinfo{pages}{1798--1828}.
\newblock


\bibitem[Bose and Hamilton(2019)]%
        {bose2019compositional}
\bibfield{author}{\bibinfo{person}{Avishek Bose} {and} \bibinfo{person}{William Hamilton}.} \bibinfo{year}{2019}\natexlab{}.
\newblock \showarticletitle{Compositional fairness constraints for graph embeddings}. In \bibinfo{booktitle}{\emph{International Conference on Machine Learning}}. PMLR, \bibinfo{pages}{715--724}.
\newblock


\bibitem[Chen et~al\mbox{.}(2018)]%
        {chen2018isolating}
\bibfield{author}{\bibinfo{person}{Ricky~TQ Chen}, \bibinfo{person}{Xuechen Li}, \bibinfo{person}{Roger~B Grosse}, {and} \bibinfo{person}{David~K Duvenaud}.} \bibinfo{year}{2018}\natexlab{}.
\newblock \showarticletitle{Isolating sources of disentanglement in variational autoencoders}.
\newblock \bibinfo{journal}{\emph{Advances in neural information processing systems}}  \bibinfo{volume}{31} (\bibinfo{year}{2018}).
\newblock


\bibitem[Creager et~al\mbox{.}(2019)]%
        {creager2019flexibly}
\bibfield{author}{\bibinfo{person}{Elliot Creager}, \bibinfo{person}{David Madras}, \bibinfo{person}{J{\"o}rn-Henrik Jacobsen}, \bibinfo{person}{Marissa Weis}, \bibinfo{person}{Kevin Swersky}, \bibinfo{person}{Toniann Pitassi}, {and} \bibinfo{person}{Richard Zemel}.} \bibinfo{year}{2019}\natexlab{}.
\newblock \showarticletitle{Flexibly fair representation learning by disentanglement}. In \bibinfo{booktitle}{\emph{International conference on machine learning}}. PMLR, \bibinfo{pages}{1436--1445}.
\newblock


\bibitem[Dai and Wang(2021)]%
        {dai2021say}
\bibfield{author}{\bibinfo{person}{Enyan Dai} {and} \bibinfo{person}{Suhang Wang}.} \bibinfo{year}{2021}\natexlab{}.
\newblock \showarticletitle{Say no to the discrimination: Learning fair graph neural networks with limited sensitive attribute information}. In \bibinfo{booktitle}{\emph{Proceedings of the 14th ACM International Conference on Web Search and Data Mining}}. \bibinfo{pages}{680--688}.
\newblock


\bibitem[Dong et~al\mbox{.}(2022)]%
        {dong2022edits}
\bibfield{author}{\bibinfo{person}{Yushun Dong}, \bibinfo{person}{Ninghao Liu}, \bibinfo{person}{Brian Jalaian}, {and} \bibinfo{person}{Jundong Li}.} \bibinfo{year}{2022}\natexlab{}.
\newblock \showarticletitle{Edits: Modeling and mitigating data bias for graph neural networks}. In \bibinfo{booktitle}{\emph{Proceedings of the ACM Web Conference 2022}}. \bibinfo{pages}{1259--1269}.
\newblock


\bibitem[Du et~al\mbox{.}(2022)]%
        {du2022small}
\bibfield{author}{\bibinfo{person}{Yuanqi Du}, \bibinfo{person}{Xiaojie Guo}, \bibinfo{person}{Yinkai Wang}, \bibinfo{person}{Amarda Shehu}, {and} \bibinfo{person}{Liang Zhao}.} \bibinfo{year}{2022}\natexlab{}.
\newblock \showarticletitle{Small molecule generation via disentangled representation learning}.
\newblock \bibinfo{journal}{\emph{Bioinformatics}} \bibinfo{volume}{38}, \bibinfo{number}{12} (\bibinfo{year}{2022}), \bibinfo{pages}{3200--3208}.
\newblock


\bibitem[Dwork et~al\mbox{.}(2012)]%
        {dwork2012fairness}
\bibfield{author}{\bibinfo{person}{Cynthia Dwork}, \bibinfo{person}{Moritz Hardt}, \bibinfo{person}{Toniann Pitassi}, \bibinfo{person}{Omer Reingold}, {and} \bibinfo{person}{Richard Zemel}.} \bibinfo{year}{2012}\natexlab{}.
\newblock \showarticletitle{Fairness through awareness}. In \bibinfo{booktitle}{\emph{Proceedings of the 3rd innovations in theoretical computer science conference}}. \bibinfo{pages}{214--226}.
\newblock


\bibitem[Fan et~al\mbox{.}(2021)]%
        {fan2021fair}
\bibfield{author}{\bibinfo{person}{Wei Fan}, \bibinfo{person}{Kunpeng Liu}, \bibinfo{person}{Rui Xie}, \bibinfo{person}{Hao Liu}, \bibinfo{person}{Hui Xiong}, {and} \bibinfo{person}{Yanjie Fu}.} \bibinfo{year}{2021}\natexlab{}.
\newblock \showarticletitle{Fair graph auto-encoder for unbiased graph representations with wasserstein distance}. In \bibinfo{booktitle}{\emph{2021 IEEE International Conference on Data Mining (ICDM)}}. IEEE, \bibinfo{pages}{1054--1059}.
\newblock


\bibitem[Gao et~al\mbox{.}(2022)]%
        {gao2022graph}
\bibfield{author}{\bibinfo{person}{Chen Gao}, \bibinfo{person}{Xiang Wang}, \bibinfo{person}{Xiangnan He}, {and} \bibinfo{person}{Yong Li}.} \bibinfo{year}{2022}\natexlab{}.
\newblock \showarticletitle{Graph neural networks for recommender system}. In \bibinfo{booktitle}{\emph{Proceedings of the Fifteenth ACM International Conference on Web Search and Data Mining}}. \bibinfo{pages}{1623--1625}.
\newblock


\bibitem[Guo et~al\mbox{.}(2022)]%
        {guo2022learning}
\bibfield{author}{\bibinfo{person}{Jingwei Guo}, \bibinfo{person}{Kaizhu Huang}, \bibinfo{person}{Xinping Yi}, {and} \bibinfo{person}{Rui Zhang}.} \bibinfo{year}{2022}\natexlab{}.
\newblock \showarticletitle{Learning disentangled graph convolutional networks locally and globally}.
\newblock \bibinfo{journal}{\emph{IEEE Transactions on Neural Networks and Learning Systems}} (\bibinfo{year}{2022}).
\newblock


\bibitem[Hardt et~al\mbox{.}(2016)]%
        {hardt2016equality}
\bibfield{author}{\bibinfo{person}{Moritz Hardt}, \bibinfo{person}{Eric Price}, {and} \bibinfo{person}{Nati Srebro}.} \bibinfo{year}{2016}\natexlab{}.
\newblock \showarticletitle{Equality of opportunity in supervised learning}.
\newblock \bibinfo{journal}{\emph{Advances in neural information processing systems}}  \bibinfo{volume}{29} (\bibinfo{year}{2016}).
\newblock


\bibitem[Higgins et~al\mbox{.}(2016)]%
        {higgins2016beta}
\bibfield{author}{\bibinfo{person}{Irina Higgins}, \bibinfo{person}{Loic Matthey}, \bibinfo{person}{Arka Pal}, \bibinfo{person}{Christopher Burgess}, \bibinfo{person}{Xavier Glorot}, \bibinfo{person}{Matthew Botvinick}, \bibinfo{person}{Shakir Mohamed}, {and} \bibinfo{person}{Alexander Lerchner}.} \bibinfo{year}{2016}\natexlab{}.
\newblock \showarticletitle{beta-vae: Learning basic visual concepts with a constrained variational framework}. In \bibinfo{booktitle}{\emph{International conference on learning representations}}.
\newblock


\bibitem[Jang et~al\mbox{.}(2016)]%
        {jang2016categorical}
\bibfield{author}{\bibinfo{person}{Eric Jang}, \bibinfo{person}{Shixiang Gu}, {and} \bibinfo{person}{Ben Poole}.} \bibinfo{year}{2016}\natexlab{}.
\newblock \showarticletitle{Categorical reparameterization with gumbel-softmax}.
\newblock \bibinfo{journal}{\emph{arXiv preprint arXiv:1611.01144}} (\bibinfo{year}{2016}).
\newblock


\bibitem[Jordan and Freiburger(2015)]%
        {jordan2015effect}
\bibfield{author}{\bibinfo{person}{Kareem~L Jordan} {and} \bibinfo{person}{Tina~L Freiburger}.} \bibinfo{year}{2015}\natexlab{}.
\newblock \showarticletitle{The effect of race/ethnicity on sentencing: Examining sentence type, jail length, and prison length}.
\newblock \bibinfo{journal}{\emph{Journal of Ethnicity in Criminal Justice}} \bibinfo{volume}{13}, \bibinfo{number}{3} (\bibinfo{year}{2015}), \bibinfo{pages}{179--196}.
\newblock


\bibitem[Kang et~al\mbox{.}(2020)]%
        {kang2020inform}
\bibfield{author}{\bibinfo{person}{Jian Kang}, \bibinfo{person}{Jingrui He}, \bibinfo{person}{Ross Maciejewski}, {and} \bibinfo{person}{Hanghang Tong}.} \bibinfo{year}{2020}\natexlab{}.
\newblock \showarticletitle{Inform: Individual fairness on graph mining}. In \bibinfo{booktitle}{\emph{Proceedings of the 26th ACM SIGKDD international conference on knowledge discovery \& data mining}}. \bibinfo{pages}{379--389}.
\newblock


\bibitem[Kim and Mnih(2018)]%
        {kim2018disentangling}
\bibfield{author}{\bibinfo{person}{Hyunjik Kim} {and} \bibinfo{person}{Andriy Mnih}.} \bibinfo{year}{2018}\natexlab{}.
\newblock \showarticletitle{Disentangling by factorising}. In \bibinfo{booktitle}{\emph{International Conference on Machine Learning}}. PMLR, \bibinfo{pages}{2649--2658}.
\newblock


\bibitem[Kingma and Welling(2013)]%
        {kingma2013auto}
\bibfield{author}{\bibinfo{person}{Diederik~P Kingma} {and} \bibinfo{person}{Max Welling}.} \bibinfo{year}{2013}\natexlab{}.
\newblock \showarticletitle{Auto-encoding variational bayes}.
\newblock \bibinfo{journal}{\emph{arXiv preprint arXiv:1312.6114}} (\bibinfo{year}{2013}).
\newblock


\bibitem[Kipf and Welling(2016)]%
        {kipf2016semi}
\bibfield{author}{\bibinfo{person}{Thomas~N Kipf} {and} \bibinfo{person}{Max Welling}.} \bibinfo{year}{2016}\natexlab{}.
\newblock \showarticletitle{Semi-supervised classification with graph convolutional networks}.
\newblock \bibinfo{journal}{\emph{arXiv preprint arXiv:1609.02907}} (\bibinfo{year}{2016}).
\newblock


\bibitem[Kou et~al\mbox{.}(2020)]%
        {kou2020disentangle}
\bibfield{author}{\bibinfo{person}{Xiaoyu Kou}, \bibinfo{person}{Yankai Lin}, \bibinfo{person}{Shaobo Liu}, \bibinfo{person}{Peng Li}, \bibinfo{person}{Jie Zhou}, {and} \bibinfo{person}{Yan Zhang}.} \bibinfo{year}{2020}\natexlab{}.
\newblock \showarticletitle{Disentangle-based continual graph representation learning}.
\newblock \bibinfo{journal}{\emph{arXiv preprint arXiv:2010.02565}} (\bibinfo{year}{2020}).
\newblock


\bibitem[Li et~al\mbox{.}(2022)]%
        {li2022disentangled}
\bibfield{author}{\bibinfo{person}{Ansong Li}, \bibinfo{person}{Zhiyong Cheng}, \bibinfo{person}{Fan Liu}, \bibinfo{person}{Zan Gao}, \bibinfo{person}{Weili Guan}, {and} \bibinfo{person}{Yuxin Peng}.} \bibinfo{year}{2022}\natexlab{}.
\newblock \showarticletitle{Disentangled graph neural networks for session-based recommendation}.
\newblock \bibinfo{journal}{\emph{IEEE Transactions on Knowledge and Data Engineering}} (\bibinfo{year}{2022}).
\newblock


\bibitem[Li et~al\mbox{.}(2020)]%
        {li2020dyadic}
\bibfield{author}{\bibinfo{person}{Peizhao Li}, \bibinfo{person}{Yifei Wang}, \bibinfo{person}{Han Zhao}, \bibinfo{person}{Pengyu Hong}, {and} \bibinfo{person}{Hongfu Liu}.} \bibinfo{year}{2020}\natexlab{}.
\newblock \showarticletitle{On dyadic fairness: Exploring and mitigating bias in graph connections}. In \bibinfo{booktitle}{\emph{International Conference on Learning Representations}}.
\newblock


\bibitem[Lin et~al\mbox{.}(2023)]%
        {lin2023bemap}
\bibfield{author}{\bibinfo{person}{Xiao Lin}, \bibinfo{person}{Jian Kang}, \bibinfo{person}{Weilin Cong}, {and} \bibinfo{person}{Hanghang Tong}.} \bibinfo{year}{2023}\natexlab{}.
\newblock \showarticletitle{BeMap: Balanced Message Passing for Fair Graph Neural Network}.
\newblock \bibinfo{journal}{\emph{arXiv preprint arXiv:2306.04107}} (\bibinfo{year}{2023}).
\newblock


\bibitem[Ling et~al\mbox{.}(2022)]%
        {ling2022learning}
\bibfield{author}{\bibinfo{person}{Hongyi Ling}, \bibinfo{person}{Zhimeng Jiang}, \bibinfo{person}{Youzhi Luo}, \bibinfo{person}{Shuiwang Ji}, {and} \bibinfo{person}{Na Zou}.} \bibinfo{year}{2022}\natexlab{}.
\newblock \showarticletitle{Learning fair graph representations via automated data augmentations}. In \bibinfo{booktitle}{\emph{The Eleventh International Conference on Learning Representations}}.
\newblock


\bibitem[Lipton(2018)]%
        {lipton2018mythos}
\bibfield{author}{\bibinfo{person}{Zachary~C Lipton}.} \bibinfo{year}{2018}\natexlab{}.
\newblock \showarticletitle{The mythos of model interpretability: In machine learning, the concept of interpretability is both important and slippery.}
\newblock \bibinfo{journal}{\emph{Queue}} \bibinfo{volume}{16}, \bibinfo{number}{3} (\bibinfo{year}{2018}), \bibinfo{pages}{31--57}.
\newblock


\bibitem[Liu et~al\mbox{.}(2020)]%
        {liu2020independence}
\bibfield{author}{\bibinfo{person}{Yanbei Liu}, \bibinfo{person}{Xiao Wang}, \bibinfo{person}{Shu Wu}, {and} \bibinfo{person}{Zhitao Xiao}.} \bibinfo{year}{2020}\natexlab{}.
\newblock \showarticletitle{Independence promoted graph disentangled networks}. In \bibinfo{booktitle}{\emph{Proceedings of the AAAI Conference on Artificial Intelligence}}, Vol.~\bibinfo{volume}{34}. \bibinfo{pages}{4916--4923}.
\newblock


\bibitem[Locatello et~al\mbox{.}(2019a)]%
        {locatello2019fairness}
\bibfield{author}{\bibinfo{person}{Francesco Locatello}, \bibinfo{person}{Gabriele Abbati}, \bibinfo{person}{Thomas Rainforth}, \bibinfo{person}{Stefan Bauer}, \bibinfo{person}{Bernhard Sch{\"o}lkopf}, {and} \bibinfo{person}{Olivier Bachem}.} \bibinfo{year}{2019}\natexlab{a}.
\newblock \showarticletitle{On the fairness of disentangled representations}.
\newblock \bibinfo{journal}{\emph{Advances in neural information processing systems}}  \bibinfo{volume}{32} (\bibinfo{year}{2019}).
\newblock


\bibitem[Locatello et~al\mbox{.}(2019b)]%
        {locatello2019challenging}
\bibfield{author}{\bibinfo{person}{Francesco Locatello}, \bibinfo{person}{Stefan Bauer}, \bibinfo{person}{Mario Lucic}, \bibinfo{person}{Gunnar Raetsch}, \bibinfo{person}{Sylvain Gelly}, \bibinfo{person}{Bernhard Sch{\"o}lkopf}, {and} \bibinfo{person}{Olivier Bachem}.} \bibinfo{year}{2019}\natexlab{b}.
\newblock \showarticletitle{Challenging common assumptions in the unsupervised learning of disentangled representations}. In \bibinfo{booktitle}{\emph{international conference on machine learning}}. PMLR, \bibinfo{pages}{4114--4124}.
\newblock


\bibitem[Lv et~al\mbox{.}(2021)]%
        {lv2021learning}
\bibfield{author}{\bibinfo{person}{Guofeng Lv}, \bibinfo{person}{Zhiqiang Hu}, \bibinfo{person}{Yanguang Bi}, {and} \bibinfo{person}{Shaoting Zhang}.} \bibinfo{year}{2021}\natexlab{}.
\newblock \showarticletitle{Learning unknown from correlations: graph neural network for inter-novel-protein interaction prediction}.
\newblock \bibinfo{journal}{\emph{arXiv preprint arXiv:2105.06709}} (\bibinfo{year}{2021}).
\newblock


\bibitem[Ma et~al\mbox{.}(2019)]%
        {disengcn}
\bibfield{author}{\bibinfo{person}{Jianxin Ma}, \bibinfo{person}{Peng Cui}, \bibinfo{person}{Kun Kuang}, \bibinfo{person}{Xin Wang}, {and} \bibinfo{person}{Wenwu Zhu}.} \bibinfo{year}{2019}\natexlab{}.
\newblock \showarticletitle{Disentangled graph convolutional networks}. In \bibinfo{booktitle}{\emph{International conference on machine learning}}. PMLR, \bibinfo{pages}{4212--4221}.
\newblock


\bibitem[Ma et~al\mbox{.}(2022)]%
        {ma2022learning}
\bibfield{author}{\bibinfo{person}{Jing Ma}, \bibinfo{person}{Ruocheng Guo}, \bibinfo{person}{Mengting Wan}, \bibinfo{person}{Longqi Yang}, \bibinfo{person}{Aidong Zhang}, {and} \bibinfo{person}{Jundong Li}.} \bibinfo{year}{2022}\natexlab{}.
\newblock \showarticletitle{Learning fair node representations with graph counterfactual fairness}. In \bibinfo{booktitle}{\emph{Proceedings of the Fifteenth ACM International Conference on Web Search and Data Mining}}. \bibinfo{pages}{695--703}.
\newblock


\bibitem[Rahman et~al\mbox{.}(2019)]%
        {rahman2019fairwalk}
\bibfield{author}{\bibinfo{person}{Tahleen Rahman}, \bibinfo{person}{Bartlomiej Surma}, \bibinfo{person}{Michael Backes}, {and} \bibinfo{person}{Yang Zhang}.} \bibinfo{year}{2019}\natexlab{}.
\newblock \showarticletitle{Fairwalk: Towards fair graph embedding}.
\newblock  (\bibinfo{year}{2019}).
\newblock


\bibitem[Rubenstein et~al\mbox{.}(2018)]%
        {rubenstein2018learning}
\bibfield{author}{\bibinfo{person}{Paul~K Rubenstein}, \bibinfo{person}{Bernhard Sch{\"o}lkopf}, {and} \bibinfo{person}{Ilya Tolstikhin}.} \bibinfo{year}{2018}\natexlab{}.
\newblock \showarticletitle{Learning disentangled representations with wasserstein auto-encoders}.
\newblock  (\bibinfo{year}{2018}).
\newblock


\bibitem[Spinelli et~al\mbox{.}(2021)]%
        {spinelli2021fairdrop}
\bibfield{author}{\bibinfo{person}{Indro Spinelli}, \bibinfo{person}{Simone Scardapane}, \bibinfo{person}{Amir Hussain}, {and} \bibinfo{person}{Aurelio Uncini}.} \bibinfo{year}{2021}\natexlab{}.
\newblock \showarticletitle{Fairdrop: Biased edge dropout for enhancing fairness in graph representation learning}.
\newblock \bibinfo{journal}{\emph{IEEE Transactions on Artificial Intelligence}} \bibinfo{volume}{3}, \bibinfo{number}{3} (\bibinfo{year}{2021}), \bibinfo{pages}{344--354}.
\newblock


\bibitem[Sz{\'e}kely and Rizzo(2009)]%
        {szekely2009brownian}
\bibfield{author}{\bibinfo{person}{G{\'a}bor~J Sz{\'e}kely} {and} \bibinfo{person}{Maria~L Rizzo}.} \bibinfo{year}{2009}\natexlab{}.
\newblock \showarticletitle{Brownian distance covariance}.
\newblock \bibinfo{journal}{\emph{The annals of applied statistics}} (\bibinfo{year}{2009}), \bibinfo{pages}{1236--1265}.
\newblock


\bibitem[Sz{\'e}kely et~al\mbox{.}(2007)]%
        {szekely2007measuring}
\bibfield{author}{\bibinfo{person}{G{\'a}bor~J Sz{\'e}kely}, \bibinfo{person}{Maria~L Rizzo}, {and} \bibinfo{person}{Nail~K Bakirov}.} \bibinfo{year}{2007}\natexlab{}.
\newblock \showarticletitle{Measuring and testing dependence by correlation of distances}.
\newblock  (\bibinfo{year}{2007}).
\newblock


\bibitem[Takac and Zabovsky(2012)]%
        {takac2012data}
\bibfield{author}{\bibinfo{person}{Lubos Takac} {and} \bibinfo{person}{Michal Zabovsky}.} \bibinfo{year}{2012}\natexlab{}.
\newblock \showarticletitle{Data analysis in public social networks}. In \bibinfo{booktitle}{\emph{International scientific conference and international workshop present day trends of innovations}}, Vol.~\bibinfo{volume}{1}.
\newblock


\bibitem[Van~Steenkiste et~al\mbox{.}(2019)]%
        {van2019disentangled}
\bibfield{author}{\bibinfo{person}{Sjoerd Van~Steenkiste}, \bibinfo{person}{Francesco Locatello}, \bibinfo{person}{J{\"u}rgen Schmidhuber}, {and} \bibinfo{person}{Olivier Bachem}.} \bibinfo{year}{2019}\natexlab{}.
\newblock \showarticletitle{Are disentangled representations helpful for abstract visual reasoning?}
\newblock \bibinfo{journal}{\emph{Advances in neural information processing systems}}  \bibinfo{volume}{32} (\bibinfo{year}{2019}).
\newblock


\bibitem[Veli{\v{c}}kovi{\'c} et~al\mbox{.}(2017)]%
        {velivckovic2017graph}
\bibfield{author}{\bibinfo{person}{Petar Veli{\v{c}}kovi{\'c}}, \bibinfo{person}{Guillem Cucurull}, \bibinfo{person}{Arantxa Casanova}, \bibinfo{person}{Adriana Romero}, \bibinfo{person}{Pietro Lio}, {and} \bibinfo{person}{Yoshua Bengio}.} \bibinfo{year}{2017}\natexlab{}.
\newblock \showarticletitle{Graph attention networks}.
\newblock \bibinfo{journal}{\emph{arXiv preprint arXiv:1710.10903}} (\bibinfo{year}{2017}).
\newblock


\bibitem[Wang et~al\mbox{.}(2022a)]%
        {wang2022disentangled}
\bibfield{author}{\bibinfo{person}{Xin Wang}, \bibinfo{person}{Hong Chen}, \bibinfo{person}{Si'ao Tang}, \bibinfo{person}{Zihao Wu}, {and} \bibinfo{person}{Wenwu Zhu}.} \bibinfo{year}{2022}\natexlab{a}.
\newblock \showarticletitle{Disentangled representation learning}.
\newblock \bibinfo{journal}{\emph{arXiv preprint arXiv:2211.11695}} (\bibinfo{year}{2022}).
\newblock


\bibitem[Wang et~al\mbox{.}(2020)]%
        {wang2020disentangled}
\bibfield{author}{\bibinfo{person}{Xiang Wang}, \bibinfo{person}{Hongye Jin}, \bibinfo{person}{An Zhang}, \bibinfo{person}{Xiangnan He}, \bibinfo{person}{Tong Xu}, {and} \bibinfo{person}{Tat-Seng Chua}.} \bibinfo{year}{2020}\natexlab{}.
\newblock \showarticletitle{Disentangled graph collaborative filtering}. In \bibinfo{booktitle}{\emph{Proceedings of the 43rd international ACM SIGIR conference on research and development in information retrieval}}. \bibinfo{pages}{1001--1010}.
\newblock


\bibitem[Wang et~al\mbox{.}(2022b)]%
        {wang2022disencite}
\bibfield{author}{\bibinfo{person}{Yifan Wang}, \bibinfo{person}{Yiping Song}, \bibinfo{person}{Shuai Li}, \bibinfo{person}{Chaoran Cheng}, \bibinfo{person}{Wei Ju}, \bibinfo{person}{Ming Zhang}, {and} \bibinfo{person}{Sheng Wang}.} \bibinfo{year}{2022}\natexlab{b}.
\newblock \showarticletitle{DisenCite: Graph-Based Disentangled Representation Learning for Context-Specific Citation Generation}. In \bibinfo{booktitle}{\emph{Proceedings of the AAAI Conference on Artificial Intelligence}}, Vol.~\bibinfo{volume}{36}. \bibinfo{pages}{11449--11458}.
\newblock


\bibitem[Wang et~al\mbox{.}(2022c)]%
        {wang2022improving}
\bibfield{author}{\bibinfo{person}{Yu Wang}, \bibinfo{person}{Yuying Zhao}, \bibinfo{person}{Yushun Dong}, \bibinfo{person}{Huiyuan Chen}, \bibinfo{person}{Jundong Li}, {and} \bibinfo{person}{Tyler Derr}.} \bibinfo{year}{2022}\natexlab{c}.
\newblock \showarticletitle{Improving fairness in graph neural networks via mitigating sensitive attribute leakage}. In \bibinfo{booktitle}{\emph{Proceedings of the 28th ACM SIGKDD Conference on Knowledge Discovery and Data Mining}}. \bibinfo{pages}{1938--1948}.
\newblock


\bibitem[Wu et~al\mbox{.}(2021)]%
        {wu2021disenkgat}
\bibfield{author}{\bibinfo{person}{Junkang Wu}, \bibinfo{person}{Wentao Shi}, \bibinfo{person}{Xuezhi Cao}, \bibinfo{person}{Jiawei Chen}, \bibinfo{person}{Wenqiang Lei}, \bibinfo{person}{Fuzheng Zhang}, \bibinfo{person}{Wei Wu}, {and} \bibinfo{person}{Xiangnan He}.} \bibinfo{year}{2021}\natexlab{}.
\newblock \showarticletitle{DisenKGAT: knowledge graph embedding with disentangled graph attention network}. In \bibinfo{booktitle}{\emph{Proceedings of the 30th ACM international conference on information \& knowledge management}}. \bibinfo{pages}{2140--2149}.
\newblock


\bibitem[Yang et~al\mbox{.}(2020)]%
        {yang2020factorizable}
\bibfield{author}{\bibinfo{person}{Yiding Yang}, \bibinfo{person}{Zunlei Feng}, \bibinfo{person}{Mingli Song}, {and} \bibinfo{person}{Xinchao Wang}.} \bibinfo{year}{2020}\natexlab{}.
\newblock \showarticletitle{Factorizable graph convolutional networks}.
\newblock \bibinfo{journal}{\emph{Advances in Neural Information Processing Systems}}  \bibinfo{volume}{33} (\bibinfo{year}{2020}), \bibinfo{pages}{20286--20296}.
\newblock


\bibitem[Yeh and Lien(2009)]%
        {yeh2009comparisons}
\bibfield{author}{\bibinfo{person}{I-Cheng Yeh} {and} \bibinfo{person}{Che-hui Lien}.} \bibinfo{year}{2009}\natexlab{}.
\newblock \showarticletitle{The comparisons of data mining techniques for the predictive accuracy of probability of default of credit card clients}.
\newblock \bibinfo{journal}{\emph{Expert systems with applications}} \bibinfo{volume}{36}, \bibinfo{number}{2} (\bibinfo{year}{2009}), \bibinfo{pages}{2473--2480}.
\newblock


\bibitem[Zeng et~al\mbox{.}(2019)]%
        {zeng2019graphsaint}
\bibfield{author}{\bibinfo{person}{Hanqing Zeng}, \bibinfo{person}{Hongkuan Zhou}, \bibinfo{person}{Ajitesh Srivastava}, \bibinfo{person}{Rajgopal Kannan}, {and} \bibinfo{person}{Viktor Prasanna}.} \bibinfo{year}{2019}\natexlab{}.
\newblock \showarticletitle{Graphsaint: Graph sampling based inductive learning method}.
\newblock \bibinfo{journal}{\emph{arXiv preprint arXiv:1907.04931}} (\bibinfo{year}{2019}).
\newblock


\bibitem[Zhao et~al\mbox{.}(2022)]%
        {zhao2022exploring}
\bibfield{author}{\bibinfo{person}{Tianxiang Zhao}, \bibinfo{person}{Xiang Zhang}, {and} \bibinfo{person}{Suhang Wang}.} \bibinfo{year}{2022}\natexlab{}.
\newblock \showarticletitle{Exploring edge disentanglement for node classification}. In \bibinfo{booktitle}{\emph{Proceedings of the ACM Web Conference 2022}}. \bibinfo{pages}{1028--1036}.
\newblock


\bibitem[Zheng et~al\mbox{.}(2021)]%
        {zheng2021adversarial}
\bibfield{author}{\bibinfo{person}{Shuai Zheng}, \bibinfo{person}{Zhenfeng Zhu}, \bibinfo{person}{Zhizhe Liu}, \bibinfo{person}{Shuiwang Ji}, \bibinfo{person}{Jian Cheng}, {and} \bibinfo{person}{Yao Zhao}.} \bibinfo{year}{2021}\natexlab{}.
\newblock \showarticletitle{Adversarial graph disentanglement}.
\newblock \bibinfo{journal}{\emph{arXiv preprint arXiv:2103.07295}} (\bibinfo{year}{2021}).
\newblock


\bibitem[Zhu et~al\mbox{.}(2023)]%
        {zhu2023fairness}
\bibfield{author}{\bibinfo{person}{Huaisheng Zhu}, \bibinfo{person}{Guoji Fu}, \bibinfo{person}{Zhimeng Guo}, \bibinfo{person}{Zhiwei Zhang}, \bibinfo{person}{Teng Xiao}, {and} \bibinfo{person}{Suhang Wang}.} \bibinfo{year}{2023}\natexlab{}.
\newblock \showarticletitle{Fairness-aware Message Passing for Graph Neural Networks}.
\newblock \bibinfo{journal}{\emph{arXiv preprint arXiv:2306.11132}} (\bibinfo{year}{2023}).
\newblock


\end{thebibliography}

\appendix

\clearpage
\section{Distance Correlation}
\label{apd:dc}
Distance correlation~\cite{szekely2007measuring,szekely2009brownian} measures the dependence between two random vectors. Inspired by DGCF~\cite{wang2020disentangled}, FairSAD employs distance correlation to ensure the independence between each component. According to Eq.(\ref{eq:loss_dist_cor}), the distance correlation is built upon the distance covariance $dCov^{2}(\cdot)$. Given two component representations $\tilde{\textbf{Z}}^{k_{1}}$, $\tilde{\textbf{Z}}^{k_{2}} \in \mathbb{R}^{n \times \frac{d_{h}}{K}}$, we denote subscript $i, 1 \leq i \leq n$, and $j, 1 \leq j \leq \frac{d_{h}}{K}$ as the row, and column number, respectively. For instance, $\tilde{\textbf{Z}}^{k_{1}}_{ij}$ represents the $i$-th row, $j$-th column of $\tilde{\textbf{Z}}^{k_{1}}$. Formally, the distance covariance between two component representations can be defined as follows:
\begin{align}
\label{eq:apd_dist_cor}
\small
\begin{split}
   dCov^{2}(\tilde{\textbf{Z}}^{k_{1}}, \tilde{\textbf{Z}}^{k_{2}})&=\sum_{j=1}^{\frac{d_h}{K}} dCov^{2}(\tilde{\textbf{Z}}^{k_{1}}_{\cdot j}, \tilde{\textbf{Z}}^{k_{2}}_{\cdot j})\\
   &=\sum_{j=1}^{\frac{d_h}{K}} \sum_{i_{1}=1}^{n} \sum_{i_{2}=1}^{n} A_{i_{1}i_{2}}B_{i_{1}i_{2}},
\end{split}
\end{align}
where $i_{1}, i_{2}=1,2,...,n$. To calculate $A_{i_{1}i_{2}}$, we have the following equation:

\begin{equation}
\label{eq:A_matrix}
\left\{
\begin{aligned}
    a_{i_{1}i_{2}} &= \lVert \tilde{\textbf{Z}}^{k_{1}}_{\cdot j,i_{1}}-\tilde{\textbf{Z}}^{k_{1}}_{\cdot j,i_{2}} \rVert_2 \\
    \overline{a}_{i_{1}\cdot} &= \frac{1}{n} \sum_{i_{2}=1}^{n} a_{i_{1}i_{2}} \\
    \overline{a}_{\cdot i_{2}} &= \frac{1}{n} \sum_{i_{1}=1}^{n} a_{i_{1}i_{2}} \\
    \overline{a}_{\cdot \cdot} &= \frac{1}{n^2} \sum_{i_{1},i_{2}=1}^{n} a_{i_{1}i_{2}} \\
    A_{i_{1}i_{2}} &= a_{i_{1}i_{2}} - \overline{a}_{i_{1}\cdot} - \overline{a}_{\cdot i_{2}} + \overline{a}_{\cdot \cdot}
\end{aligned}
\right.
\end{equation}
where $A_{i_{1}i_{2}} \in \mathbb{R}^{n \times n}$. According to Eq.(\ref{eq:A_matrix}), we can calculate $B_{i_{1}i_{2}}$ through defining $B_{i_{1}i_{2}} = b_{i_{1}i_{2}} - \overline{b}_{i_{1}\cdot} - \overline{b}_{\cdot i_{2}} + \overline{b}_{\cdot \cdot}$. Following Eqs.(\ref{eq:apd_dist_cor}) and (\ref{eq:A_matrix}), we can calculate $dCov^{2}(\tilde{\textbf{Z}}^{k_{1}},\tilde{\textbf{Z}}^{k_{1}})$ and $dCov^{2}(\tilde{\textbf{Z}}^{k_{2}},\tilde{\textbf{Z}}^{k_{2}})$. Based on the above three distance covariance, the distance correlation between $\tilde{\textbf{Z}}^{k_{1}}$, $\tilde{\textbf{Z}}^{k_{2}}$ can be calculated as shown in Eq.(\ref{eq:loss_dist_cor}).

\section{Algorithm}
\label{apd:algo}
\begin{algorithm}[]
\caption{Training Algorithm of FairSAD}
\label{alg:algorithm1}
    \begin{flushleft}
        \textbf{Input}: $\mathcal{G}=(\mathcal{V}, \mathcal{E}, \textbf{X})$ with the sensitive attribute $\textbf{s}$, node labels $\textbf{y}$, neighbor assigner $f_{a}(\cdot)$, disentangled GNNs backbone $f_{g}(\cdot)$, channel mask $f_{m}(\cdot)$, classifier $f_{c}(\cdot)$, channel discriminator $f_{d}(\cdot)$, channels number $K$, and hyperparameters $\alpha$, $\beta$. \\
        \textbf{Output}: Trained inference GNNs model with parameters $\theta=\{\theta_{f_{a}},\theta_{f_{g}},\theta_{f_{m}},\theta_{f_{c}}\}$.
    \end{flushleft}
    \begin{algorithmic}[1] 
        \WHILE{$not~converged$}
        \STATE // Sensitive attribute disentanglement
        \STATE $\textbf{W}$ $\leftarrow$ $f_{a}([\textbf{x}_{i}, \textbf{x}_{j}]), [\textbf{x}_{i}, \textbf{x}_{j}]=\{\textbf{x}_{i}, \textbf{x}_{j} \in \textbf{X}, i, j \in \mathcal{V}, j \in \mathcal{N}(i)\}$;
        \STATE $\textbf{H}$ $\leftarrow$ $f_{g}(\mathcal{G},\textbf{W})$;
        \STATE // Sensitive attribute masking
        \STATE $\tilde{\textbf{H}}$ $\leftarrow$ $f_{m}(\textbf{H})$;
        \STATE // Downstream tasks
        \STATE $\hat{\textbf{y}}$ $\leftarrow$ $f_{c}(\tilde{\textbf{H}})$;
        \STATE // Identifying channels using channel discriminator
        \STATE $\hat{\textbf{y}}^{'}_{k}\leftarrow f_{d}(\tilde{\textbf{Z}}^{k}), \tilde{\textbf{Z}}^{k} \in \tilde{\textbf{H}},  \forall k \in \{1,2,...,K\}$;
        \STATE // Calculating loss and updating model parameters
        \STATE Calculate $\mathcal{L}_{c}$, $\mathcal{L}_{m}$, $\mathcal{L}_{dc}$, $\mathcal{L}_{d}$ following Eqs.(\ref{eq:loss_mask_cov}), (\ref{eq:loss_dist_cor}), and (\ref{eq:loss_channel_discri});
        \STATE Calculate loss function  $\mathcal{L} \leftarrow \mathcal{L}_{c}+\alpha(\mathcal{L}_{dc}+\mathcal{L}_{d})+\beta\mathcal{L}_{m}$;
        \STATE Update $\theta$, $\theta_{f_{d}}$ by gradient descent;
        \ENDWHILE
        \STATE \textbf{return} $\theta$;
    \end{algorithmic}
\end{algorithm}

\begin{algorithm}[]
\caption{Forward Propagation Algorithm of the Disentangled Layer}
\label{alg:algorithm2}
    \begin{flushleft}
        \textbf{Input}: $\mathcal{G}=(\mathcal{V}, \mathcal{E}, \textbf{X})$ with the sensitive attribute $\textbf{s}$, edge weights $\textbf{W}=[\textbf{w}^{1}, \textbf{w}^{2},...,\textbf{w}^{K}]$, disentangled GNNs backbone $f_{g}(\cdot)=\{\operatorname{RED}_{k}(\cdot),\operatorname{UPD}_{k}(\cdot),\operatorname{AGG}_{k}(\cdot)\}$.\\
        \textbf{Output}: Disentangled node representations $\textbf{H}=[\textbf{Z}^{1},\textbf{Z}^{2},...,\textbf{Z}^{k}]$.
    \end{flushleft}
    \begin{algorithmic}[1] 
        \STATE // Sensitive attribute disentanglement
        \FOR{$k=1,2,...,K$}
        \STATE $\textbf{R}^{k} \leftarrow \operatorname{RED}_{k}(\textbf{X})$;
        \STATE $\textbf{Z}^{k} \leftarrow$ Aggregate and update node representation through $\operatorname{AGG}_{k}(\cdot)$ and $\operatorname{UPD}_{k}(\cdot)$ following Eq.(\ref{eq:channel_graph_conv});
        \ENDFOR
        \STATE // Sensitive attribute masking
        \STATE $\textbf{H} \leftarrow$ The concatenation of $\textbf{Z}^{1},\textbf{Z}^{2},...,\textbf{Z}^{k}$.
        \STATE \textbf{return} $\textbf{H}$;
    \end{algorithmic}
\end{algorithm}

To help better understand FairSAD, we present the training algorithm of FairSAD and the forward propagation algorithm of the disentangled layer in Algorithms~\ref{alg:algorithm1} and~\ref{alg:algorithm2}. Note that the channel discriminator $f_{d}(\cdot)$ is solely employed during the training phase.

\section{Experimental Settings}
\label{apd:exp}
\subsection{Datasets}
Five real-world fairness datasets, namely German, Bail, Credit~\cite{agarwal2021towards}, Pokec-z, and Pokec-n~\cite{dai2021say}, are employed in our experiments. We give a brief overview of these datasets as follows:

\begin{itemize}
\item \textbf{German}~\cite{asuncion2007uci} is constructed by~\cite{agarwal2021towards}. Specifically, German includes clients' data in a German bank, e.g., gender, and loan amount. Nodes represent clients in the German bank. The edges in the German dataset are constructed according to individual similarity. Regarding ``gender" as the sensitive attribute, the goal of German is to classify clients into two credit risks (high or low). 

\item \textbf{Bail}~\cite{agarwal2021towards} is a defendants dataset, where defendants in this dataset are released on bail during 1990-2009 in U.S states~\cite{jordan2015effect}. We regard nodes as defendants and edges are decided by the similarity of past criminal records and demographics. Considering ``race'' as the sensitive attribute, the task is to predict whether defendants will commit a crime after release (bail vs. no bail). 

\item \textbf{Credit}~\cite{agarwal2021towards} is a credit card user dataset~\cite{yeh2009comparisons}. where nodes represent credit card users and edges are connected based on the similarity of their spending and payment patterns. Considering ``age'' as the sensitive attribute, the task is to predict whether a user will default on their credit card payment or not (default vs. no default).

\item \textbf{Pokec-z/n}~\cite{takac2012data,dai2021say} is derived from a popular social network application in Slovakia, where Pokec-z and Pokec-n are social network data in two different provinces. Nodes denote users with features such as gender, age, interest, etc. Edge represents the friendship between users. Considering ``region'' as the sensitive attribute, the task is to predict the working field of the users.
\end{itemize}

\subsection{Baselines}
We compare FairSAD with five state-of-the-art fairness methods, including EDITS, Graphair, NIFTY, FairGNN, and FairVGNN. Among these five methods, EDITS and Graphair can be summarized as the pre-processing fairness method, which mitigates fairness-related biases in training data by modifying graph-structured data. NIFTY, FairGNN, and FairVGNN are the in-processing method, aiming to learn fair GNNs via the fairness-aware framework. A brief overview of these methods is shown as follows:

\begin{itemize}
\item \textbf{EDITS}~\cite{dong2022edits} modify graph-structured data by minimizing the Wasserstein distance between two demographics.

\item \textbf{Graphair}~\cite{ling2022learning} aims to automatically generate fair graph data to fool the discriminator via adversarial learning. 

\item \textbf{NIFTY}~\cite{agarwal2021towards} is a fair and stable graph representation learning method. The core idea behind NIFTY is learning GNNs to keep stable w.r.t. the sensitive attribute counterfactual.

\item \textbf{FairGNN}~\cite{dai2021say} aims to learn fair GNNs with limited sensitive attribute information. To achieve this goal, FairGNN employs the sensitive attribute estimator to predict the sensitive attribute while improving fairness via adversarial learning. 

\item \textbf{FairVGNN}~\cite{wang2022improving} learns a fair GNN by mitigating the sensitive attribute leakage using adversarial learning and weight clamping technologies.

\end{itemize}

\subsection{Implementation Details}
\label{apd:details}
For FairSAD, we utilize the Adam optimizer with the learning rate $lr=\{1 \times 10^{-3}, 1 \times 10^{-2}\}$, epochs = 1000, and the weight decay = $1 \times 10^{-5}$. For each component of FairSAD, we use a 2-layer MLP, a linear layer, and a linear layer as the neighbor assigner $f_{a}$, the classifier $f_{c}$, and the channel discriminator $f_{d}$, respectively.


For all baseline methods, we utilize a 1-layer GCN~\cite{kipf2016semi} with 16 hidden dimensions as the backbone and use a linear layer as the classifier. Due to the different hyperparameters for different baselines, we give the detailed setting for each baseline, as follows: 

\begin{itemize}
\item \textbf{EDITS}: we set training epochs and threshold
proportions as $\{500, 100, 500\}$, $\{0.2, 0.5, 0.02\}$ for German, Bail, and Credit. Initial learning rate $3 \times 10^{-3}$, weight decay $1 \times 10^{-7}$.

\item \textbf{Graphair}: $\beta=1$, $\alpha$, $\gamma$, and $\lambda$ are determined with a grid search among $\{0.1, 1, 10\}$, training epochs 500, initial learning rate $1 \times 10^{-4}$, weight decay $1 \times 10^{-5}$, adopt graph sampling-based
batch training~\cite{zeng2019graphsaint} with 1000 batch size for Bail, Credit, and Pokec-z/n. 

\item \textbf{NIFTY}: $\lambda=\{0.4, 0.6, 0.4, 0.6, 0.6\}$ for German, Bail, Credit, Pokec-z, and Pokec-n, training epochs 1000, learning rate $1 \times 10^{-3}$, weight decay $1 \times 10^{-5}$, drop edge rate 0.001, drop feature rate 0.1.

\item \textbf{FairGNN}: dropout and learning rate are determined with a grid search among $\{0.0, 0.5, 0.8\}$, $\{0.0001, 0.001, 0.01\}$, weight decay $1 \times 10^{-5}$, $\alpha$ = 4, $\beta$ = 0.01. 

\item \textbf{FairVGNN}: training epochs for German, Bail, Credit, Pokec-z, and Pokec-n are $\{400, 300, 200, 200, 200\}$, other hyperparameters are determined with a grid search following search space set by the author.

\end{itemize}

Moreover, all experiments are conducted on one NVIDIA TITAN RTX GPU with 24GB memory. All models are implemented with PyTorch and PyTorch-Geometric.

\end{document}